\documentclass[twoside]{article}

\usepackage[accepted]{aistats2026}
\usepackage{setspace}

\usepackage[utf8]{inputenc} 
\usepackage[T1]{fontenc}    
\usepackage{hyperref}       
\usepackage{url}            
\usepackage{booktabs}       
\usepackage{amsfonts}       
\usepackage{nicefrac}       
\usepackage{microtype}      
\usepackage{xcolor}         
\usepackage{hyperref}
\usepackage{appendix}
\usepackage{amsmath}
\usepackage{booktabs, tabularx}
\usepackage{code_styles}
\usepackage{amsthm}
\theoremstyle{plain}
\newtheorem{theorem}{Theorem}[section]

\newtheorem{lemma}[theorem]{Lemma}

\theoremstyle{definition}

\theoremstyle{remark}
\newtheorem{remark}[theorem]{Remark}

\newcommand{\E}{\mathbb{E}}

\usepackage{graphicx,booktabs,array}
\usepackage{subcaption,graphicx,booktabs}
\usepackage{enumitem}
\newcolumntype{C}[1]{>{\centering\arraybackslash}m{#1}}

\usepackage[round]{natbib}

\begin{document}
\twocolumn[

\aistatstitle{Where the Score Lives: A Wavelet View of Diffusion}

\aistatsauthor{ Emma Finn \And Binxu Wang \And  T. Anderson Keller \And  Demba E. Ba }

\aistatsaddress{The Kempner Instutite for the Study of Natural and Artificial Intelligence \\
Harvard University, Cambridge, MA 15213} ]
 
\begin{abstract}
  Score-based generative models have had remarkable success over the last decade in generating a diverse set of visually plausible images. 
  A variety of architectures including CNNs,  U-Nets, and Transformers have been used as the score-approximation network in such diffusion modeling; however, to date, relatively little is known about how these architectural choices impact generative behavior. In this work, to provide insight into this area, we propose an analytically solvable parameterization of the score function using an expansion in a 2D orthogonal wavelet basis. In particular, we derive interpretable optimal score functions in terms of the moments of the data distribution. We use this parametrization to provide an architecture-agnostic, moment-based analysis that reveals which attributes of the data distribution tend to matter most for denoising. Our score machine is flexible enough to partially mimic the relevant inductive biases of multiple architectures, including U-Nets, and CNNs, taking a step towards understanding why different score architectures can exhibit distinct generative behavior. Since our score is solvable in terms of the moments of the data, we can begin to understand how the data distribution interacts with the score network to produce the behavior we observe in diffusion models.
\end{abstract}

\section{INTRODUCTION}

Diffusion models have rapidly advanced image generation and many other generative tasks in recent years \citep{sohl-dickstein2015deep, song2019generative, ho2020denoising, karras2022edm, lou2024discretediffusionmodelingestimating}. However, it is still not clear what causes their remarkable ability to construct visually coherent samples which often generalize beyond their training distribution. This generalization property clearly depends both on the score network used and on the underlying properties of the data distribution; since it is clear that a diffusion model trained on a single image would not generalize well.

Recent work has explored both architectural contributions to the kind of creativity displayed in diffusion \citep{kamb2024analytict} and data-determined contributions \citep{wang2024unreasonableeffectivenessgaussianscore}. 
On the architectural side \citet{kamb2024analytict} demonstrate that a CNN encodes certain inductive biases (namely translation equivariance) into a denoising network when used as a diffusion backbone, encouraging a certain `patch mosaic' form of creativity; one step further, \cite{wang2025analytical} shows that when the backbone is linear and translation-equivariant, the diffusion model learns a stationary Gaussian process version of the data; 
building on prior work \cite{li2024goodscoredoeslead} which demonstrates that under certain conditions, the ideal score simply memorizes the training data. 

\looseness=-1
On the data-distribution side, spatial locality has been shown to dominate denoising across time scales \citep{niedoba2025mechanisticexplanationdiffusionmodel}, and a first-order Gaussian score has additionally been shown to explain a significant amount of observed behavior \citep{wang2024unreasonableeffectivenessgaussianscore}, including the spatial local dependency of the denoiser \citep{lukoianov2025locality}. 
Generalization grows with dataset size \citep{bonnaire2025diffusionmodelsdontmemorize}, and learned scores can be interpreted as shrinkage in a geometry-adapted harmonic basis \citep{kadkhodaie2024generalizationdiffusionmodelsarises}. Yet, it still remains unclear which distributional statistics, and which score network architectural choices, are most critical to score learning -- questions central to understanding and improving diffusion models.

In this work, we propose an interpretable parameterization of the score function using an expansion in a wavelet basis, which is remarkably flexible. We then derive an ideal score function particular to our functional form, and implement a wavelet-based score machine. Empirically, we run  experiments across different families of denoisers to understand which components of the data distribution are most relevant in which settings.

\subsection{Background on Diffusion Models}
We first provide a short overview of score-based generative modeling. Diffusion models operate by corrupting all of the input data over time according to a noising process, given by an Ornstein-Uhlenbeck stochastic differential equation \citep{song2021scorebasedgenerativemodelingstochastic}:
\begin{equation}
    d \mathbf{X}_t = \underbrace{f(\mathbf{X}_t, t)}_{\text{drift}} dt + \underbrace{g(t)}_{\text{diffusion}} dW.
\end{equation}
The reverse process is given by: 
\begin{equation}
    d \mathbf{X}_t = \left[f(\mathbf{X}_t, t) - \frac{1}{2}g(t)^2 \underbrace {\nabla_{X_t} \log (p_t(X_t))}_{\text{score function}}\right] dt + g(t) d\bar{W}.
\end{equation}
We parametrize a neural network $s_{\theta}$ to approximate the score function, and in our setup, we use the score matching objective \citep{steinsLemmaSource}:
\begin{equation}
    \mathcal{L}(\theta) = \int_0^T \lambda(t) \mathbb{E}_{p_t(\mathbf{X})} [\|s(\mathbf{X},t) - {s}_{\theta}(\mathbf{X},t)\|^2_2]
\end{equation}
Where $\lambda$ is a time dependent weight. For simplicity, in our case, we set $\lambda(t) \equiv 1$. To further simplify, we'll divide our time interval $[0,1]$ into $N$ discrete steps, and treat each step independently, so the loss decouples across time. Our loss at any particular timestep is  
\begin{equation}
    \mathcal{L}^{(t)}(\theta) = \mathbb{E}_{p_t(\mathbf{X})} [\|s(\mathbf{X},t) - {s}_{\theta}(\mathbf{X},t)\|^2_2]
\end{equation}
and our overall loss is simply the sum of all such $\mathcal{L}^{(t)}$. In the following, we will demonstrate how to parametrize the score function instead in a wavelet basis, and thereby find the optimal parameters exactly in closed form. 
\subsection{Wavelets and Denoising} \label{sec:waveletsdenoise}
\begin{figure}[t] 
\vspace{-8pt}
\centering
\includegraphics[width=0.80\columnwidth]{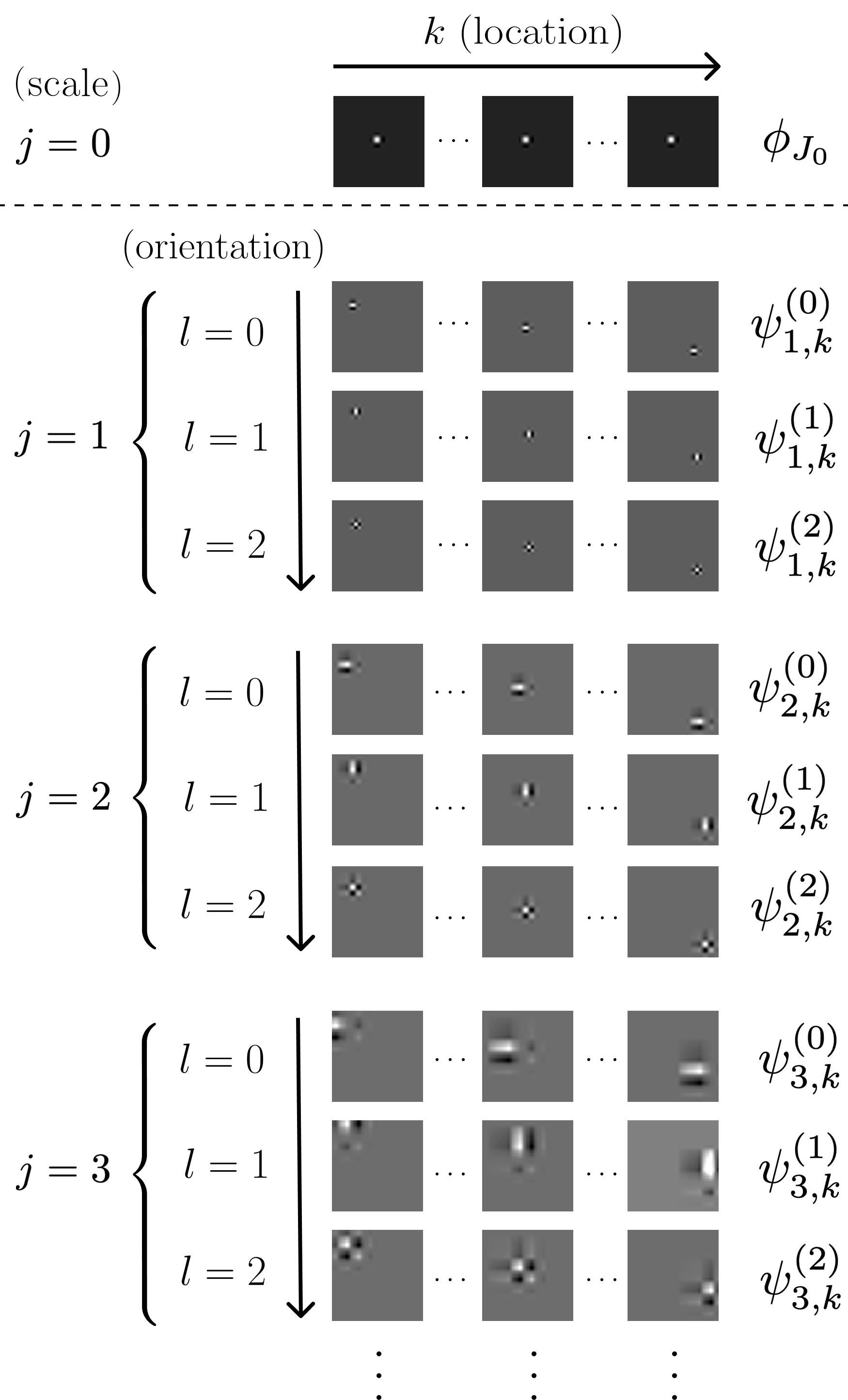}
\caption{\textbf{Daubechies Wavelets.} Visualized across scales $j$, locations $k$, and orientations $l$. We see that at the coarsest scale $J_0$, only the scale atom is kept, while orientations are included for finer detail wavelets.}

\label{fig:onecol}
\end{figure}
Wavelets have long been used for image denoising and representation.  \cite{mallat1989multires} introduced multi-resolution analysis (MRA) and a fast algorithm for the discrete wavelet transform. Classical wavelet denoising by soft thresholding was developed by \cite{donoho_ideal_1994}, and further in \citep{Donoho1995SoftThresholding, daubechies1992ten, chang2000adaptive, portilla2003image}, with related analytic multiscale constructions such as wavelet scattering networks \citep{bruna2013invariant} and the steerable pyramid \citep{simoncelli1995steerable}. Like Fourier methods, wavelet decompositions represent square-integrable functions in a basis indexed by “frequencies,” but unlike Fourier bases, wavelets are localized in both space and scale, providing joint spatial–frequency resolution. 

More recently, \citet{phung2023waveletdiffusionmodelsfast} have attempted to use wavelets to speed up diffusion sampling and other score-based generative frameworks, as in \citep{guth2022waveletscorebasedgenerativemodeling}. In \citep{phung2023waveletdiffusionmodelsfast}, the authors implement a modified diffusion model which leverages wavelets to increase sampling and training efficiency. Their approach applies wavelet decomposition to the input image and performs denoising in wavelet space, reducing computational cost by separating low- and high-frequency components into a more structured representation than raw pixels. This wavelet transform is used not only at the image level, but also throughout the generator’s feature hierarchy. 

Moreover, there is an increasing awareness that wavelet approximations can be used to improve the quality of samples derived from modern image generation models, like diffusion models. For example in \citep{sigillo2025latentwaveletdiffusionultrahighresolution}, the authors demonstrate that wavelets can be used to significantly increase the quality of texture and detail in very high resolution images. In particular, they propose a ``wavelet-derived spatial saliency map'' which uses a discrete wavelet transformation to identify where the image stores most of its structural detail. This wavelet-based approach to improve detail quality for high-resolution images has also been proposed in \citep{Zhang_2025_CVPR}. Finally, outside the context of image generation, papers such as \citep{hu2025waveletdiffusionneuraloperator} have proposed using wavelet transforms to help diffusion-based methods for PDE simulation better capture sharp changes and ``long-term dependencies.'' As in \citep{phung2023waveletdiffusionmodelsfast}, the key insight is that diffusion is better behaved in the wavelet domain than in the original state space of the PDE.

This performance improvement across several different problem settings suggests that wavelets can be a useful representation for classes of data on which one might want to train diffusion models, especially when multi-resolution structure is important. Moreover, \citep{falck2023multiresolutionframeworkunetsapplications} show, in a multi-resolution framework, that U-Nets with average pooling implicitly learn a Haar wavelet basis representation of the data, which further motivates our use of wavelets to approximate the score functions that are typically learned by U-Nets.

We adopt compactly supported Daubechies wavelets (Fig. \ref{fig:onecol}). Let $\phi$ denote the scaling (“father”) function and $\psi$ the wavelet (“mother”) function in one dimension. In two dimensions we form tensor products to obtain one scaling atom and three detail atoms at each location/scale. The translates and dilates of these functions form an orthogonal basis of the entirety of $L^2(\mathbb{R}^2)$. 

For scale $j\in\mathbb Z$ and translation $k=(k_1,k_2)\in\mathbb Z^2$:
\begin{align*}
\phi_{j,k}(\mathbf u) &= 2^{j}\,\phi(2^{j}u_1-k_1)\,\phi(2^{j}u_2-k_2),\\
\psi^{(\ell)}_{j,k}(\mathbf u) &= 2^{j}\,g^{(\ell)}(2^{j}u_1-k_1,\,2^{j}u_2-k_2),\qquad \ell\in\{0,1,2\},
\end{align*}
where $g^{(0)}(a,b)=\psi(a)\phi(b)$, $g^{(1)}(a,b)=\phi(a)\psi(b)$, and $g^{(2)}(a,b)=\psi(a)\psi(b)$. While the functions $\phi$ and $\psi$ are not available in closed form for the Daubechies wavelets, for integer $N\ge1$, the Daubechies–$N$ wavelet $\psi(x)$ and scaling function $\phi(x)$ are defined by the two‐scale relations:
\begin{align*}
\phi(x)
 & = \sqrt{2}\sum_{n=0}^{2N-1}h_n\,\phi(2x-n), \\
\qquad
\psi(x)
& = \sqrt{2}\sum_{n=0}^{2N-1}g_n\,\phi(2x-n).
\end{align*}

With periodic boundary handling on $[0,1]^2$ (our default), the set
\[
\mathcal B_{J_0}\;=\;\Big\{\phi_{J_0,k}\Big\}_{k}\;\cup\;\Big\{\psi^{(\ell)}_{j,k}\Big\}_{j\ge J_0,\;k,\;\ell}
\]
forms an orthonormal basis of $L^2([0,1]^2)$. (Here $J_0$ is the coarsest scale; only the scaling atoms at $J_0$ are kept, while all finer detail wavelets $\psi^{(\ell)}_{j,k}$ are included for $j\ge J_0$.) At each fixed \((j,k)\), the triple \(\big(\psi^{(1)}_{j,k},\psi^{(2)}_{j,k},\psi^{(3)}_{j,k}\big)\) are the three orientation components often called the “horizontal”, “vertical”, and “diagonal” details.
We refer to this per-location orientation triplet as the detail band at \((j,k)\).

\subsection{Our Contributions}
Concretely, this work offers four core contributions:
\begin{itemize}
\item \textbf{Analytic wavelet score.} We parameterize the score in an orthonormal wavelet basis and, via Stein’s identity, reduce each coefficient to a closed-form ridge least-squares estimate with moment-based right-hand sides.
\item \textbf{Structured dependencies.} We introduce three interpretable classes which serve to isolate the statistics that matter across noise scales.
\item \textbf{Empirical findings.} On MNIST ($32{\times}32$ \& $64{\times}64$) we find: higher polynomial degree lowers MSE; \emph{local} coupling is most reliable across $\sigma$; \emph{band-tying} sharpens edges but can raise MSE at low signal-to-noise ratio; benefits of wavelets increase with resolution.
\item \textbf{Against trained baselines.} Compared to CNN/U-Net denoisers, our models narrow the gap at low–moderate noise by explicitly encoding short-range, multiscale locality—without gradient-based training and with interpretable coefficients.
\end{itemize}

\begin{figure}[t] %
\centering
\includegraphics[width=\columnwidth]{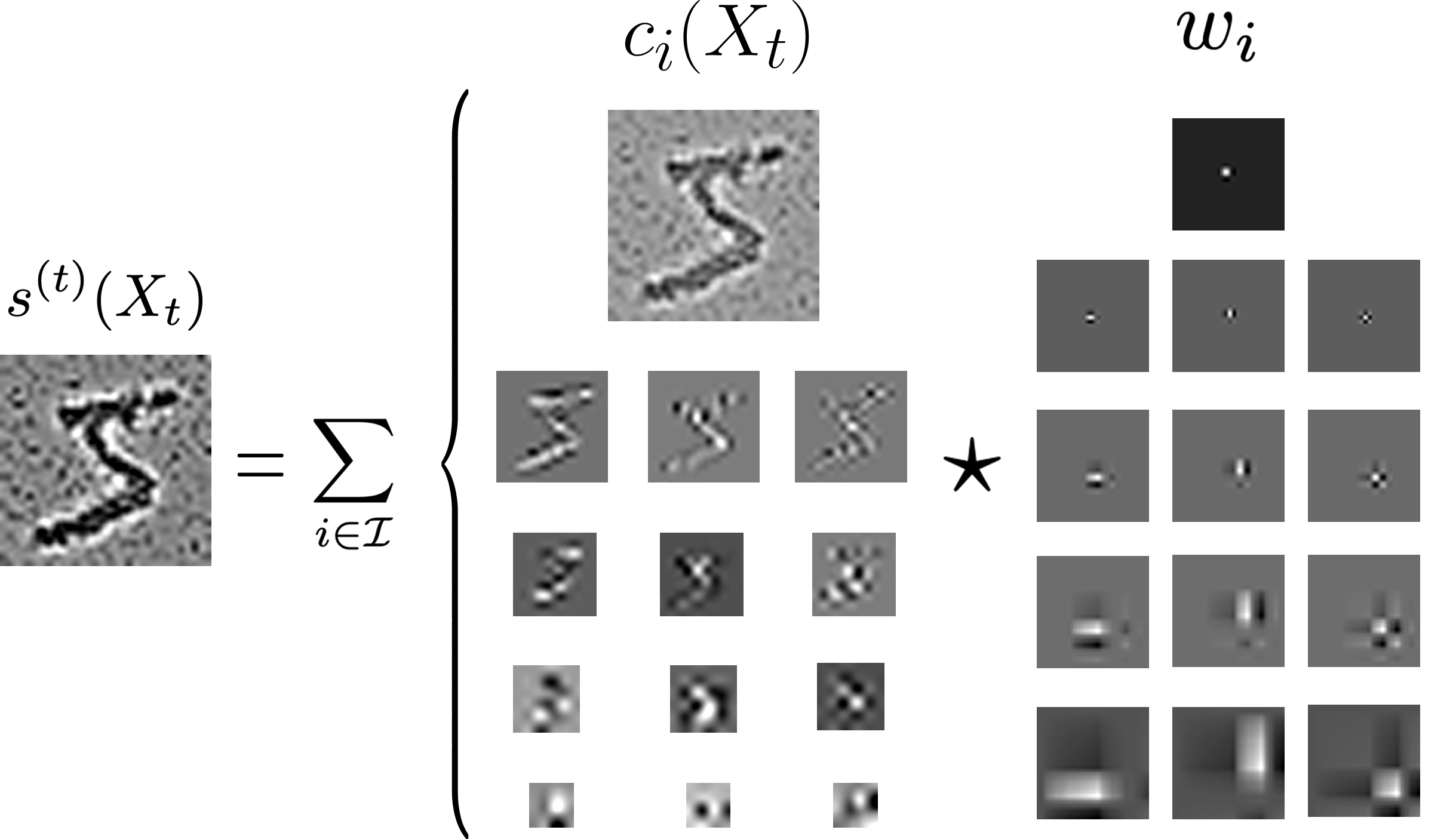}
\caption{\textbf{Wavelet Expansion of the Score}. The score of a given image $X_t$ can be expanded in an orthonormal wavelet basis by representing it as a sum of coefficients $c_i(X_t)$ multiplied by corresponding wavelets $w_i$ (Equation \ref{eqn:expanding_the_score}). Since the wavelets come in many translated copies, we only display wavelets at a single spatial location $k$, and display coefficients in spatially ordered `feature maps'. The sum over spatial locations is then implied by the convolution operator ($\star$).}
\label{fig:wavelet_expansion}
\end{figure}
\section{A WAVELET EXPANSION OF THE SCORE}
\subsection{Expanding the Score}
As motivated in Section~\ref{sec:waveletsdenoise}, wavelets provide a natural multiscale basis for representing diffusion scores. We now show that, after expanding the score in this basis and modeling each coefficient as a linear function of wavelet-derived features, score learning reduces to a collection of closed-form ridge regression problems.

Let $x \in \mathbb{R}^d$ denote a discretized version of an image, obtained by sampling the underlying grayscale images on  $[0,1]^2$ over the pixel grid. For each noise level $t$, let $X_t$ denote the corresponding noisy image. Let $\tilde{w}_i(u)$ be the $i^{th}$ continuous wavelet atom and let $w_i$ denote its sampled, vectorized version on the image grid. In the rest of the paper, unless otherwise noted, we choose to work in the discrete setting. Thus the score $s_{\text{true}}(x,t) = \nabla_x \log p_t(x) \in \mathbb{R}^d$. Recall that as a function of $x$, $s_{\text{true}}: \mathbb{R}^d \to \mathbb{R}^d$, but at a particular noisy image $X_t$, $s_{\text{true}}(X_t,t) \in \mathbb{R}^d$. 
For a grayscale image $X_t$ at noise level $t$ and score $s_{\text{true}}(\cdot, t)$, since wavelets form an orthonormal basis of $L^2$, the score at time t in a grayscale image can be expanded as
\begin{equation}
\label{eqn:expanding_the_score}
s^{(t)}(X_t)\;=\;\sum_{i\in\mathcal I}\,c_i(X_t)\,w_i,\quad
c_i(X_t)=\big\langle s_{\text{true}}(X_t,t),\,w_i\big\rangle,
\end{equation}
where $\{w_i\}_{i\in\mathcal I}=\mathcal B_{J_0}$ indexes $(J_0,k)$ for scaling atoms and $(j,k,\ell)$ for detail atoms (visualized in Figure \ref{fig:wavelet_expansion}). 
The way in which we choose to model $\langle  s_{\text{true}}(X_t), w_i\rangle$ determines the properties of the score function estimator. We model each coefficient by features $\varphi_i(X_t)\in\mathbb R^{d_i}$ and parameters $\alpha_i^{(t)}\in\mathbb R^{d_i}$ (visualized in Fig. \ref{fig:coeff_approx}):
\begin{equation}
\label{eq:model}
\widehat c_i(X_t)\;=\;\alpha_i^{(t)\top}\varphi_i(X_t),\quad
s_{\theta}^{(t)}(X_t)\;=\;\sum_{i\in\mathcal I}\widehat c_i(X_t)\,w_i .
\end{equation}
Because $\{w_i\}$ is orthonormal, the population squared loss decouples across $i$ and so taking the gradient w.r.t. $\alpha_i^{(t)}$ yields the simultaneous equations
\begin{align}
\label{eq:normalequations}
    &\mathcal L^{(t)}(\theta)= \mathbb E_{X_t\sim p_t}\Big\|s(X_t)-\sum_{i}\alpha_i^{(t)\top}\varphi_i(X_t)w_i\Big\|_{L^2}^2 \\ 
    & \!\Rightarrow\!
\frac{\partial \mathcal L^{(t)}}{\partial \alpha_i^{(t)}}=-2\mathbb E\Big[\varphi_i(X_t)\big(\langle s(X_t), w_i \rangle-\alpha_i^{(t)\top}\varphi_i(X_t)\big)\Big]=0
\end{align}
Solving for the optimal coefficients $\alpha_i^{(t)*}$, we find, if $\Sigma_i = \mathbb{E}[\varphi_i\varphi_i^\top]$ is invertible:
\begin{align}
\label{eq:normal-eq-pt2}
\underbrace{\mathbb E[\varphi_i \varphi_i^\top]}_{\Sigma_i}\,\alpha_i^{(t)}
  &= \mathbb E\!\big[\varphi_i(X_t)\,\langle s(X_t), w_i \rangle\big] \\
\Rightarrow\quad
\alpha_i^{(t)\star}
  &= \Sigma_i^{-1}\,\mathbb E\!\big[\varphi_i(X_t)\,\langle s(X_t), w_i \rangle\big]
\end{align}
We can further simplify, by applying a more general form of Stein's Identity as in \citep{steinsLemmaSource} to the expectation of the inner product. In particular, for some function $f$, Stein's Score Identity says
\begin{equation}
\mathbb{E}_{X\sim p_t}\!\big[\,s_t(X)\cdot f(X)\,\big]
\;=\;
-\,\mathbb{E}_{X\sim p_t}\!\big[\,\nabla\!\cdot f(X)\,\big]
\end{equation}
for $p_t$ smooth and under vanishing boundary flux (e.g., periodic boundaries). 
In our case, we write $\varphi_i(X_t)=[\varphi_{i,1}(X_t),\dots,\varphi_{i,d_i}(X_t)]^\top$.
For each component $j$, apply Stein’s identity with the vector field
\[
f_j(x)\;=\;\varphi_{i,j}(x)\,w_i,\qquad \|w_i\|_2=1.
\]
Then
\begingroup\small
\begin{align}
\mathbb{E}\!\big[\varphi_{i,j}(X_t)\,\langle s_t(X_t), w_i\rangle\big]
  &= -\,\mathbb{E}\!\big[\nabla_{X_t}\big(\varphi_{i,j}(X_t)\,w_i\big)\big] \\
  &= -\,\mathbb{E}\!\big[\langle \nabla_{X_t} \varphi_{i,j}(X_t),\, w_i\rangle\big]
\end{align}
\endgroup

Stacking over $j=1,\dots,d_i$ gives the vector form
\begin{equation}
\label{eq:stein-vector}
\mathbb{E}\!\big[\varphi_i(X_t)\,\langle s_t(X_t),w_i\rangle\big]
\;=\;
-\,\mathbb{E}\!\big[(\nabla \varphi_i(X_t))^\top w_i\big]
\end{equation}
Thus, we find that our solution is 

\begin{equation}
\label{eq:solution_normal}
 \alpha_i^{(t)\star}\;=\;-\E[\varphi_i\varphi_i^\top]^{-1}\,\mathbb\,\mathbb{E}\!\big[(\nabla \varphi_i(X_t))^\top w_i\big]
\end{equation}

In practice, expectations are replaced by sample averages over $n$ training images at time $t$,
$\mathbb{E}[f(X_t)]\approx \frac{1}{n}\sum_{r=1}^n f\!\big(X_t^{(r)}\big)$. We also add a ridge regularization term, because $\hat{\Sigma}_i$ can be singular or ill-conditioned with high-degree features or correlated wavelet coefficients, we stabilize and control variance by adding a ridge regularization, as follows
\[
\hat{\alpha}_i^{(t)} \;=\; \big(\hat{\Sigma}_i+\gamma I\big)^{-1}\hat{b}_i,\quad \gamma>0,
\]
which guarantees an invertible system and mitigates overfitting.

Thus, for samples $\{X_t^{(n)}\}_{n=1}^N$ and ridge $\gamma\ge 0$,
\begin{align}
\label{eq:ridge}
\hat{\alpha}_i^{(t)}(\gamma)
= \bigg(& \frac{1}{N}\sum_{n=1}^N \varphi_i^{(n)} \varphi_i^{(n)\top} + \gamma I\bigg)^{-1} \nonumber \\
& \cdot\  \bigg(-\,\frac{1}{N}\sum_{n=1}^N (\nabla \varphi_i(X_t^{(n)}))^\top w_i\bigg).
\end{align}
We estimate $\nabla \varphi_i(X_t^{(n)})$ analytically from the chosen features using the method of moments.

Diagonalizing $\Sigma_i=U\Lambda U^\top$ shows that \((\Sigma_i+\gamma I)^{-1}\) weights eigen-directions by $1/(\lambda+\gamma)$. Thus, given that the features are well suited to the data such that $\Sigma_i$ in \eqref{eq:normal-eq-pt2} is well conditioned and its columns are not highly correlated, the estimator weights eigen directions by $\frac{1}{\lambda + \gamma}$. Small-$\lambda$ directions receive higher coefficients but are more noise sensitive, which the ridge regression helps to limit. Intuitively, what this says is that this score approximator emphasizes lower-variance feature directions (small~$\lambda$) and down weights higher-variance directions (large~$\lambda$). This aligns with the observation that natural images exhibit approximately power-law spectral decay and sparse wavelet coefficients, whereas white noise spreads energy more uniformly; consequently, informative structure often concentrates in a subset of scales and orientations. The model learns to correct more strongly along low‐variance modes of the data distribution and ignore the high‐variance ones. 

\begin{figure}[t] %
\centering
\includegraphics[width=.99\columnwidth]{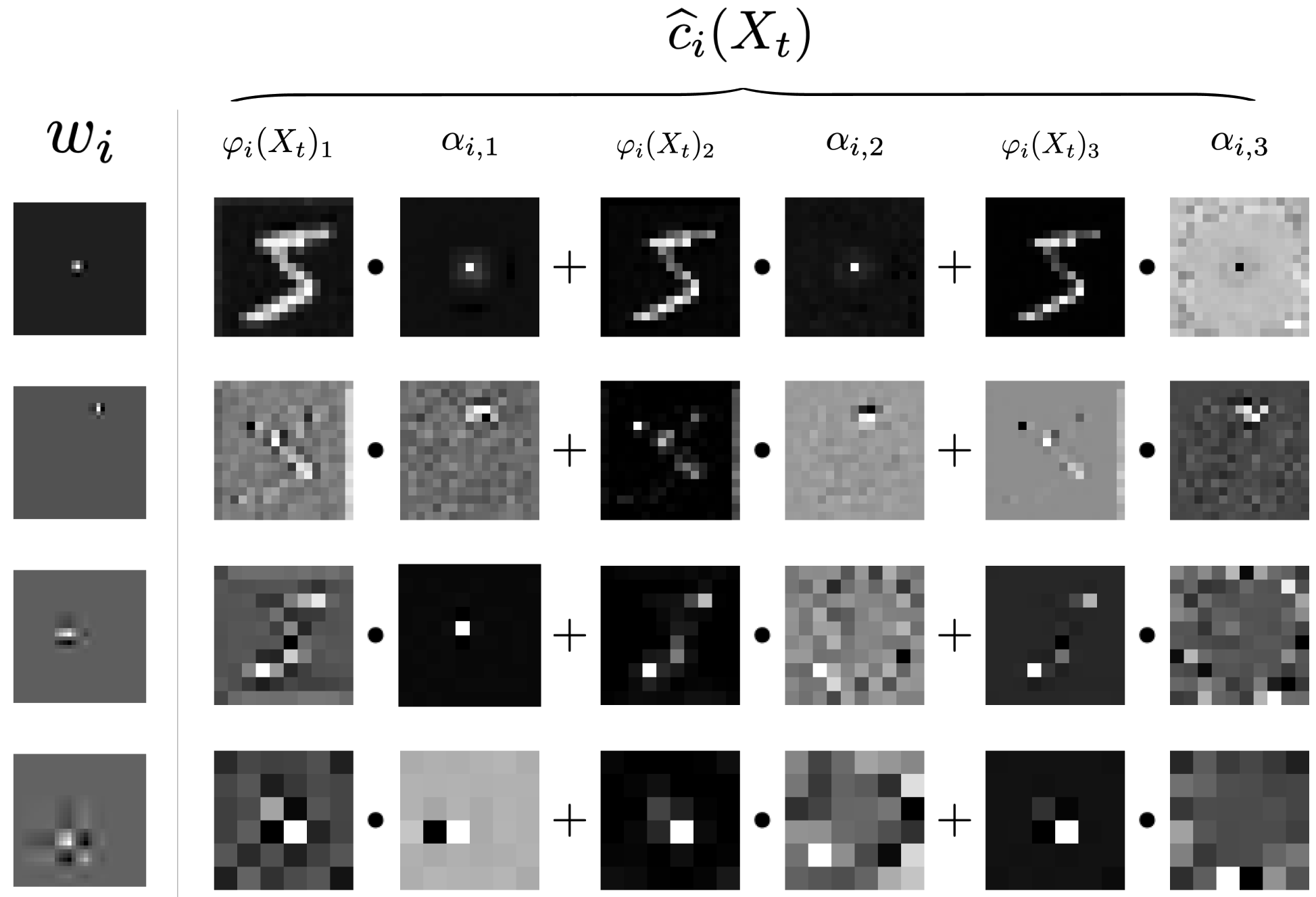}
\caption{\textbf{Wavelet Coefficient Approximation}. We approximate the unknown true coefficients $c_i(X_t)$ of the true score function as an inner product of features $\varphi_i(X_t)$ and parameters $\alpha_i^{(t)}$. The features depicted here are \emph{independent} degree 3 polynomial features.}
\label{fig:coeff_approx}
\end{figure}

\subsection{Correlation Structures in the Data}
Natural images exhibit structured dependencies in the wavelet domain: heavy–tailed marginals per coefficient, co-activation across orientations at a fixed location, and spatial persistence along edges and textures \citep{NIPS1999_6a5dfac4}. As the noise level decreases, these dependencies become more pronounced. We therefore study three families that isolate, then incrementally reintroduce, these correlations.

\begin{enumerate}[label=(\roman*),leftmargin=*]
\item \emph{Independent (diagonal).} We model each coefficient
\(y_t^{(i)}=\langle X_t,w_i\rangle\) in isolation using degree–\(D\) monomials (or probabilists’ Hermite polynomials for numerical stability).
This “mean-field” score approximator is the right baseline when \(p_t\) is close to a product distribution (early time steps, near-Gaussian), and it makes failures highly interpretable: any gap to stronger models directly measures the predictive value of cross-coefficient structure that a diagonal model cannot use.

\item \emph{Band-tied.} At a fixed scale and location \((j,k)\), we couple the three detail orientations \(\ell\in\{0,1,2\}\) with degree–\(D\) interactions (each monomial includes the target coordinate).
This targets cross-orientation co-activation caused by edges and corners and mirrors the channel-mixing inductive bias of CNN/U-Net score networks at a single spatial site.
Improvements here isolate the contribution of  within-pixel, within-scale structure—testing whether local orientation interactions alone explain denoising gains as \(t\) decreases.

\item \emph{Local-coupled.} At fixed scale and orientation, we allow the coefficient at location \(k\) to depend on neighbors \(k+\delta\) within a Chebyshev ball \(\|\delta\|_\infty\le r\).
This realizes a small neighborhood in wavelet space and emulates the increasing receptive field of convolutional (or locally attentive) score networks. Gains that grow then saturate with \(r\) quantify how much spatial context the score actually needs for accurate denoising and globally consistent structure as noise recedes.

\end{enumerate}

We use \eqref{eq:stein-vector} to build families of models that probe how scale/orientation structure in the data influences the score. We propose three families of models which can provide insight into what properties of the data distribution are relevant for score-based diffusion.
In the Daubechies wavelet setting, we have 
\begin{align}
s_\theta(X_t)^{(t)}(\mathbf{u}) = 
\sum_{k = (0,0)}^{(2^{J_0}-1, 2^{J_0}-1)} \theta_k(X_t, t) \, \phi_{J_0, k}(\mathbf{u}) \nonumber \\
+ \sum_{j = J_0}^{J_{\max}} \sum_{k = (0,0)}^{(2^{j}-1, 2^{j}-1)} \sum_{l=1}^3 \zeta_{j,k,l}(X_t, t) \, \psi_{j,k}^{(l)}(\mathbf{u})
\end{align}
where $J_0$ is the coarsest scale of the approximation, analogous to the spatial resolution at the network’s bottleneck and for an image of dimension $H \times W$ $J_{\text{max}} = \lfloor \log_2(\min(H,W))\rfloor$, which corresponds to the highest level of wavelet detail encoded.
\subsubsection{Independent Baseline}
Assume (unrealistically) that coefficients decouple across scale $j$, location $k$, and detail band $l$. Let $y_t^{(i)}=\langle X_t,w_i\rangle$ with moments
$\mu_{r}^{(i)}(t)\;=\;\mathbb E\!\big[(y_t^{(i)})^r\big]$
and model our coefficients as
\begin{align}
\theta_k(X_t,t)& =\sum_{m=0}^D b_m^{(k)}(t)\,\langle X_t,\phi_{J_0,k}\rangle^m,
\\
\zeta_{j,k,\ell}(X_t,t)& =\sum_{m=0}^D d_m^{(j,k,\ell)}(t)\,\langle X_t,\psi^{(\ell)}_{j,k}\rangle^m.
\end{align}

Using \eqref{eq:stein-vector} with monomial features gives the normal equations analogous to~\eqref{eq:normalequations}for each index $i$ and $r=0,\dots,D$. In particular we substitute $\phi(y) = y^m$ into Stein's identity and apply it in each coefficients to give
\begin{equation*}
\sum_{m=0}^D a_m^{(i)}(t)\mu_{m+r}^{(i)}(t)=-r\mu_{r-1}^{(i)}(t),
\;\ a_\bullet^{(i)}\in\{b_\bullet^{(k)},d_\bullet^{(j,k,\ell)}\},
\end{equation*}
This is a Hankel system $H^{(i)}(t)\,a^{(i)}(t)=-\,h^{(i)}(t)$, where
\begin{align*}
H^{(i)}_{r,m}(t) & =\mu_{r+m}^{(i)}(t), \\
h^{(i)}(t) & =\big[\,0,\ \mu_{0}^{(i)}(t),\ 2\mu_{1}^{(i)}(t),\dots, D\mu_{D-1}^{(i)}(t)\,\big]^\top.
\end{align*}
Observe that all entries of \(H^{(j,k,l)}(t)\) and \(h^{(j,k,l)}(t)\) are computable from clean-data raw moments of \(Y_0=\langle X_0,\psi^{(l)}_{j,k}\rangle\).
\begingroup\small
\begin{equation}
\label{eq:moment-binomial}
\mu_{r}^{(i)}(t)
=\sum_{m=0}^{r} \binom{r}{m}\,\bar\alpha_t^{\frac{m}{2}}\,(1-\bar\alpha_t)^{\frac{r-m}{2}}
\mathbb E\!\big[\langle X_0,w_i\rangle^{m}\big]
\mathbb E[Z^{r-m}].
\end{equation}
\endgroup
where $\bar{\alpha}$ is the same $\bar{\alpha_t}$ derived from the noise schedule $\{\alpha_t\}_{t}$ found in the standard DDPM variance-preserving forward process $X_t = \sqrt{\bar{\alpha_t}} X_0 + \sqrt{1 - \bar{\alpha_t}}Z$ for $Z$ a standard multivariate normal random variable. We 
For small $D$, we can easily compute $a^{(i)}(t)=-(H^{(i)}(t))^{-1}h^{(i)}(t)$ (or $(H^{(i)}+\gamma I)^{-1}$ with ridge $\gamma\!\ge\!0$). The co-factor formula for the coefficients reads
\begin{equation}
    \hat{\alpha}_{i}(t)
\;=\;
-\,\frac{1}{\det H^{(i)}(t)}\,
\sum_{r=1}^{D} r\,\mu^{(i)}_{r-1}(t)\;C^{(i)}_{r,m}(t)
\end{equation}
with \(C^{(j,k,\ell)}_{r,m}(t)\) the \((r,m)\)-cofactor of \(H^{(i)}(t)\). This closed form can be used to investigate how higher order moments of the data distribution impact the score. Though clearly independence is an unrealistic assumption, this model is fast and interpretable, and serves as (i) an initial baseline, and (ii) a diagnostic lower bound. We can investigate the value of different kinds of correlation by measuring the difference in performance between models with different kinds of limited dependence.
Allowing each coefficient to depend arbitrarily on \emph{all} wavelet coordinates
$y_t=(\langle X_t,w_1\rangle,\ldots,\langle X_t,w_n\rangle)$—i.e., learning $n$ functions
$f_i: \mathbb{R}^n\!\to\!\mathbb{R}$ or even degree–$D$ multivariate polynomials—leads to
combinatorial parameter growth and brittle estimation. We therefore restrict to
\emph{structured}, computationally tractable, and interpretable dependencies
(diagonal/independent, band-tied, and local-coupled), which retain closed-form or
efficient normal-equation solvers while capturing the dominant correlations.

\subsubsection{Wavelet Band Coupling}
One very natural form of correlation is to allow wavelet coefficients at the same scale $j$ and location $k$ to depend on one another. Let 
\(y_0 = \langle X_t, \psi_{j,k}^{(0)} \rangle, \quad 
y_1 = \langle X_t, \psi_{j,k}^{(1)} \rangle, \quad 
y_2 = \langle X_t, \psi_{j,k}^{(2)} \rangle, \)
At each scale/location in the detail bands $(j,k,l)$, we assume a polynomial coefficient form of degree $D$:
\begin{equation}
\begin{aligned}
\zeta^{(t)}_{j,k,0}(X_t)
&= C^{(t)}_0 
+ \sum_{a=1}^{D} \sum_{b=0}^{D-a} \sum_{c=0}^{D-a-b} 
    \beta^{(0)}_{a,b,c}\, y_0^{a} y_1^{b} y_2^{c},\\
\zeta^{(t)}_{j,k,1}(X_t)
&= C^{(t)}_1 
+ \sum_{b=1}^{D} \sum_{a=0}^{D-b} \sum_{c=0}^{D-b-a} 
    \beta^{(1)}_{a,b,c}\, y_0^{a} y_1^{b} y_2^{c},\\
\zeta^{(t)}_{j,k,2}(X_t)
&= C^{(t)}_2 
+ \sum_{c=1}^{D} \sum_{a=0}^{D-c} \sum_{b=0}^{D-c-a} 
    \beta^{(2)}_{a,b,c}\, y_0^{a} y_1^{b} y_2^{c}.
\end{aligned}
\end{equation}
The constraint ensures each monomial contains the target coordinate $y_\ell$ at least once; cross
terms are allowed but pure “other-orientation” terms are excluded.) 
We define our $\theta$ coefficients as polynomials with the same optimal values as in the independent case. Estimation proceeds via the same normal-equation machinery as in \eqref{eq:stein-vector}, but with mixed moments at $(j,k)$:
\[
\mu^{(j,k)}_{pqr}(t)
:=\E\!\big[\,y_0^{p}y_1^{q}y_2^{r}\,\big],\qquad p+q+r\le D+1,
\]
which, under the forward process with
orthonormal $\{\psi^{(\ell)}_{j,k}\}$, expand into clean-data mixed moments and factorized Gaussian
noise moments.

\subsubsection{Local Coupling}
Another natural choice of coupling is to allow wavelets in the same local neighborhood. For a radius $r\in\mathbb N$, define the neighborhood
\(
 \Delta_r \;=\; \{\delta\in\mathbb Z^2:\ \|\delta\|_\infty \le r,\ \delta\neq (0,0)\}
\)
 We allow wavelets in the same neighborhood to interact. Fixing a scale $j$ and orientation $\ell\in\{0,1,2\}$, let the oriented wavelet/detail coefficient at spatial location $k\in\mathcal K$ be
\(y_k \;=\; \langle X_t,\, \psi^{(\ell)}_{j,k}\rangle\). 
Let \(D\ge 1\) be the total degree. Define
\(S_D \;=\; \{(d,e)\in\mathbb N^2:\ d,e\ge 1,\ d+e\le D\} \).
We define the functional forms of $\theta$ as follows:
\begin{equation*}
\begin{alignedat}{1}
&\theta_{J_0,k}(X_t,t)
=\; \sum_{i=0}^{D} \alpha_i\, \big\langle X_t,\ \phi_{J_0,k}\big\rangle^{i} \\
&\quad+\; 
\sum_{\delta\in\Delta_r}\ 
\sum_{\substack{(d,e)\in S_D}}
\beta_{\delta,d,e}\, \big\langle X_t,\ \phi_{J_0,k}\big\rangle^{\,d}\, \big\langle X_t,\ \phi_{J_0,k + \delta}\big\rangle^{\,e}
\end{alignedat}
\end{equation*}
where $\alpha_i$ and $\beta_{\delta, d,e}$ are the parameters over which we optimize. Similarly for $\zeta$ we  define
\begin{equation*}
\begin{alignedat}{1}
&\zeta_{j, k, l}(X_t, t)
=\; \sum_{i=0}^{D} \xi_i\, \big\langle X_t, \psi_{j,k,l}\big\rangle^{i} \\
&\quad+\;
\sum_{\delta\in\Delta_r}\ 
\sum_{\substack{(d,e)\in S_D}}
\omega_{\delta,d,e}\, \big\langle X_t, \psi_{j,k,l}\big\rangle^{\,d}\, \big\langle X_t, \psi_{j,k + \delta,l}\big\rangle^{\,e}  
\end{alignedat}
\end{equation*}
where $\xi_i$ and $\omega_{\delta, d,e}$ are the parameters over which we optimize. As in the wavelet band coupling, estimation proceeds via the same machinery as in \eqref{eq:stein-vector}, but with mixed moments.

\section{METHODS}
In order to implement this we return to the most general framing of the problem: ridge-regression in a non-linear feature space defined by wavelet coefficients. 
\paragraph{Preprocessing} We begin by preprocessing the images by normalizing them to $[0,1]$ and resizing them. We use compactly supported Daubechies (db2) wavelets with periodized boundaries. The 2-D tensor-product basis $\mathcal B_{J_0}$ includes scaling atoms at the coarsest kept scale $J_0$ and detail atoms for $j\ge J_0$ up to $J_{\max}=\lfloor\log_2 \min(H,W)\rfloor$ (here $J_{\max}=3$ at $32\times 32$ in order to avoid boundary effects) with three orientations $\ell\in\{0,1,2\}$. The resulting basis forms an approximately orthonormal linear operator, $B \in \mathbb{R}^{p \times d}$ where $p$ is the number of features and $d = H \times W$ is the flattened dimension of the image. $B$ maps vectorized images $X_t \in \mathbb{R}^d$ to wavelet coefficients.

\begin{figure}
    \centering
    \includegraphics[width=\linewidth]{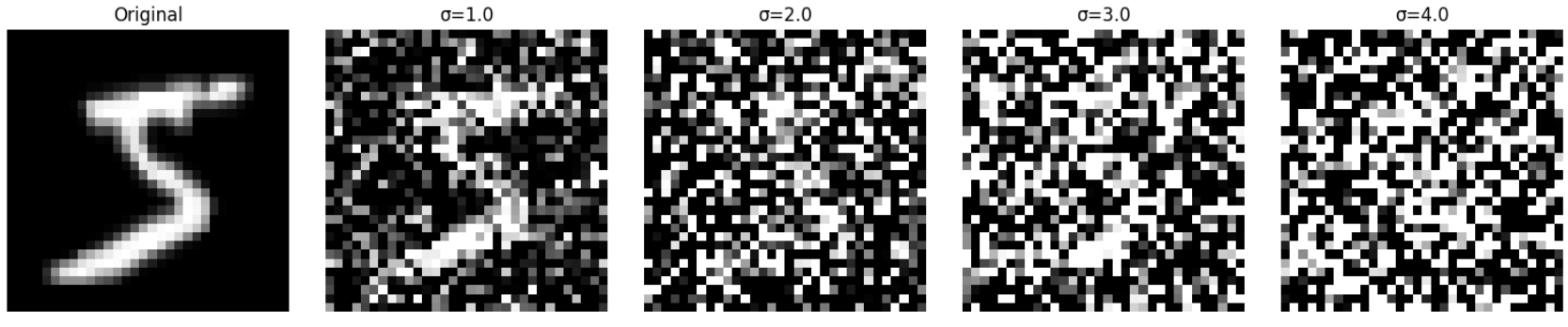}
    \caption{\textbf{Visualization of noise regimes.}  Clean image $\sigma = 0$ (left) to highly noised $\sigma =4$ (right)}
    \label{fig:noisyexamples}
\end{figure}
\paragraph{Noise Model} We add Gaussian noise according to the variance exploding regime $X_t = X_0 + \sigma Z$ for $Z \sim \mathcal{N}(0,1)$. We consider four noise regimes in the below experiments, $\sigma \in \{1,2,3,4\}$ (see Fig. ~\ref{fig:noisyexamples}). We note that in our theoretical derivation~\ref{eq:moment-binomial}, we use the variance preserving formulation of the noising schedule. An equivalent form in terms of $\sigma_t$ is also possible and just requires altering the coefficients to be in terms of the function of $\sigma_t$ corresponding to $\bar{\alpha}_t$ used there. This yields noise projections $c = X_t B^T$ that serve as inputs to our non-linear feature constructions. In practice, because $B$ is over-complete and therefore not exactly orthonormal, we use a generalized inverse of $B$ instead.

\paragraph{Feature Expansion and Regression} We generate the non-linear features according to the model of correlation we're interested in. In practice we solve the ridge regularized optimization 
\begin{equation}
    \widehat W_\gamma \;=\; \arg\min_W \frac1N\|AW-Y\|_F^2 + \gamma\|W\|_F^2
\end{equation}

where $A$ is a matrix containing the stacked features $\varphi(X_t)$ computed on a finite number of noised samples $X_t$ with  shape $N \times p$ and $Y$ its the matrix of corresponding clean images whose $n^{th}$ row is $X_0^{(n)}$. 
\textbf{Lemma} (Informal) The optimal $W$ from this equation yields the same denoised predictions as our wavelet by wavelet approach in Equation ~\ref{eq:solution_normal} up to some small approximation error.

\textbf{Lemma} (Formal) If there exists a left-inverse $R\in\mathbb{R}^{d\times M}$
such that $R^\top B = I_d$, then letting  $F:\mathbb{R}^M\to\mathbb{R}^P$ be any fixed feature map and writing $\varphi(x) := F(xB^\top)$, we have that for $X_t$ and $X_0$ the two ridge problems yield the same optima. 
\begingroup\small
\begin{equation}
\nonumber
    \text{\bf(Pixel)}\;\
W_\gamma^\star \in \arg\min_{W\in\mathbb{R}^{P\times d}}
\; \mathbb{E}\bigl[\|\varphi(X_t)W - X_0\|^2\bigr] + \gamma\|W\|_F^2,
\end{equation}
\endgroup
\begingroup\small
\begin{equation}
    \text{\bf(Coeff)}\;\
U_\gamma^\star \in \arg\min_{U\in\mathbb{R}^{P\times M}}
\; \mathbb{E}\bigl[\|F(X_tB^\top)U - X_0B^\top\|^2\bigr] + \gamma\|U\|_F^2.
\end{equation}
\endgroup
Then the minimizers satisfy
\[
W_\gamma^\star \;=\; U_\gamma^\star R^\top,
\text{and hence}\
\varphi(x)W_\gamma^\star \;=\; F(xB^\top)U_\gamma^\star R^\top
 \forall x.
\]
In particular, the denoised predictions produced by the pixel-space ridge and by
the wavelet-by-wavelet ridge followed by synthesis agree exactly.
We defer the proof of this equivalence to Appendix ~\ref{app:math}. 

We also add a ridge term to improve numerical stability \citep{hoerl1970ridge}. We additionally experiment with the probabilists’ Hermite polynomials to expand our $\theta$ and $\zeta$ with increased stability. 
The probabilists’ Hermite polynomials $\{\mathrm{He}_n(x)\}_{n\ge0}$ are defined by $\mathrm{He}_n(x)=(-1)^n e^{x^2/2}\frac{d^n}{dx^n}e^{-x^2/2}$ and are orthogonal for $Z\sim\mathcal N(0,1)$ with $\mathbb E[\mathrm{He}_m(Z)\mathrm{He}_n(Z)]=n!\,\delta_{mn}$. 
Using $\mathrm{He}_n$ in place of raw monomials makes feature vectors orthogonal in expectation under near-Gaussian coordinates $y_t^{(i)}$, reducing covariance and bringing $\Sigma_i=\mathbb E[\varphi_i\varphi_i^\top]$ closer to diagonal (hence a smaller condition number). In practice, this improves the conditioning of the matrix equations \eqref{eq:normal-eq-pt2} and yields coefficients less sensitive to scaling and polynomial degree, especially when combined with a small ridge penalty. The results for this model are deferred to Appendix ~\ref{app:results}. 

\paragraph{Denoising and Comparison with Models} We denoise our images according to $\widehat{X_0} = A \widehat{W}$ and clamp the images back to $[0,1]$ before computing MSE over the entire dataset. We also compare our reconstruction MSE to that of trained diffusion models with U-Net and CNN backbones. We defer the details of training those models to Appendix~\ref{app:training}.

\section{EXPERIMENTS AND DISCUSSION}

With these closed-form equations and model families in hand, we now test how much band and local coupling improve over the independent baseline across different noise levels. These models are just a few examples of the correlation structures in the data that can be probed with this approach.

\begin{figure}
    \centering
    \begin{subfigure}{0.92\linewidth}
        \centering
        \includegraphics[width= 0.9\linewidth]{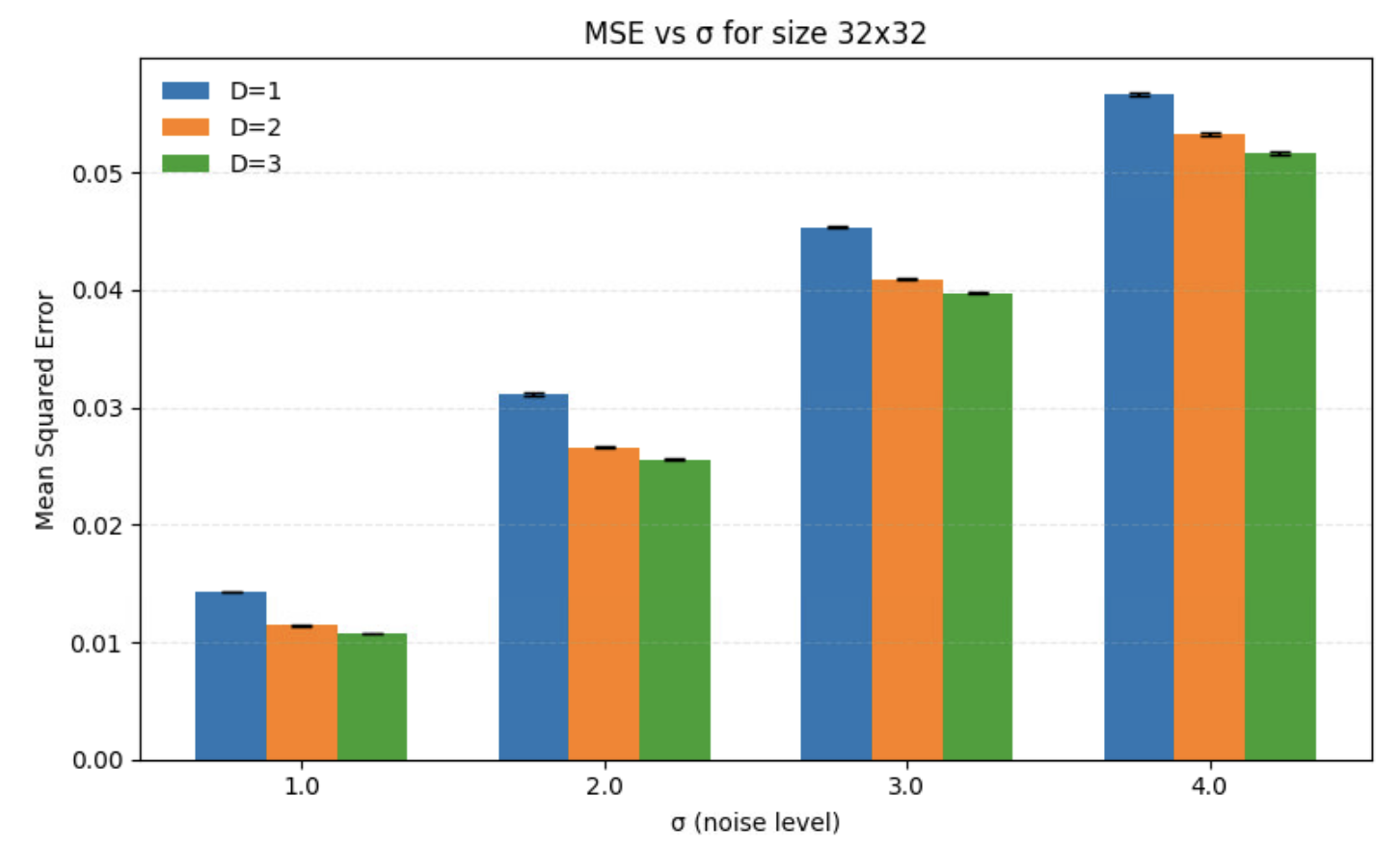}
        \caption{Independent Features}
        \label{fig:indep-32}
    \end{subfigure}

    \begin{subfigure}{0.92\linewidth}
        \centering
        \includegraphics[width= 0.9\linewidth]{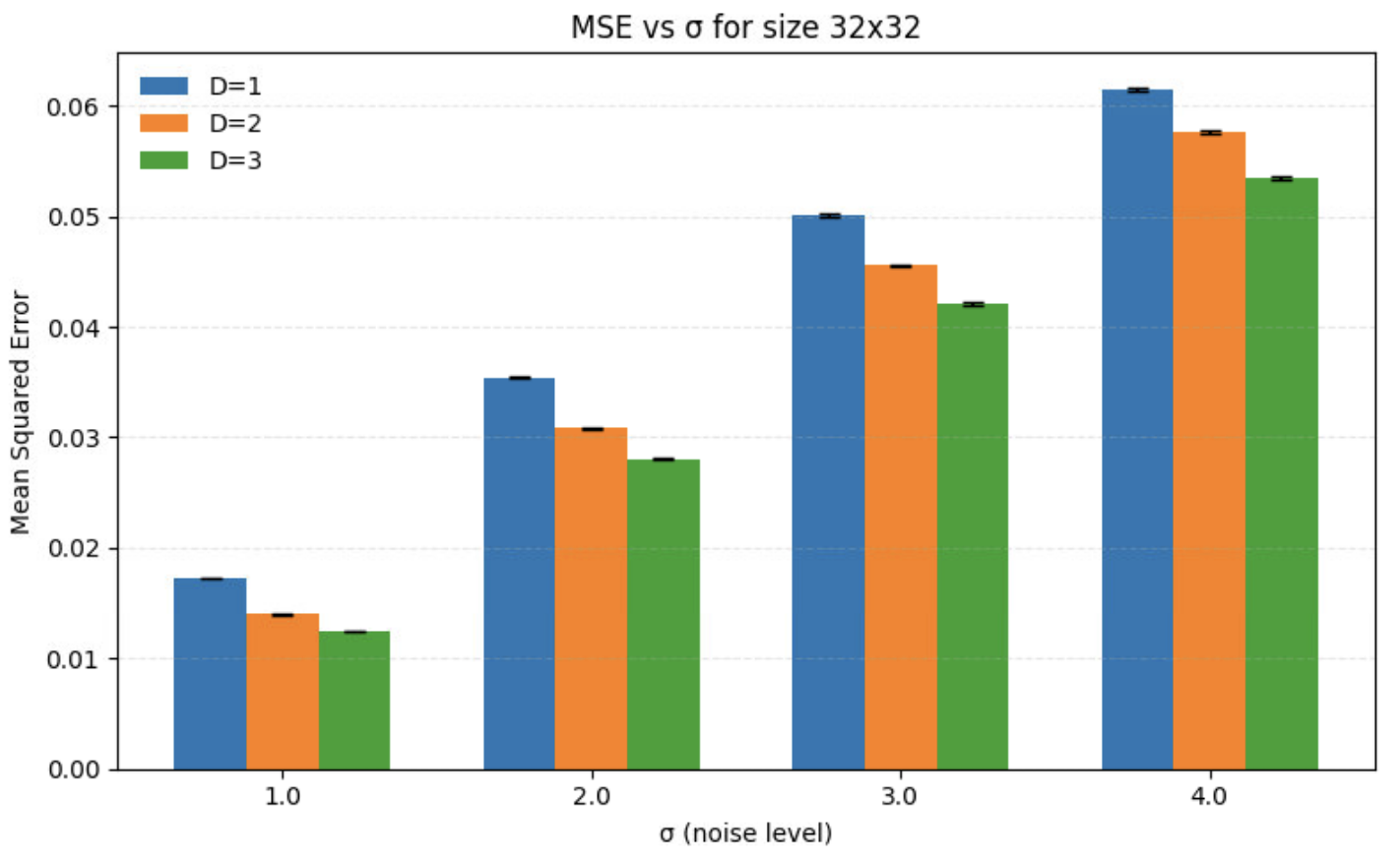}
        \caption{Band-Tied Coupling}
        \label{fig:band-32}
    \end{subfigure}

    \begin{subfigure}{0.92\linewidth}
        \centering
        \includegraphics[width= 0.9\linewidth]{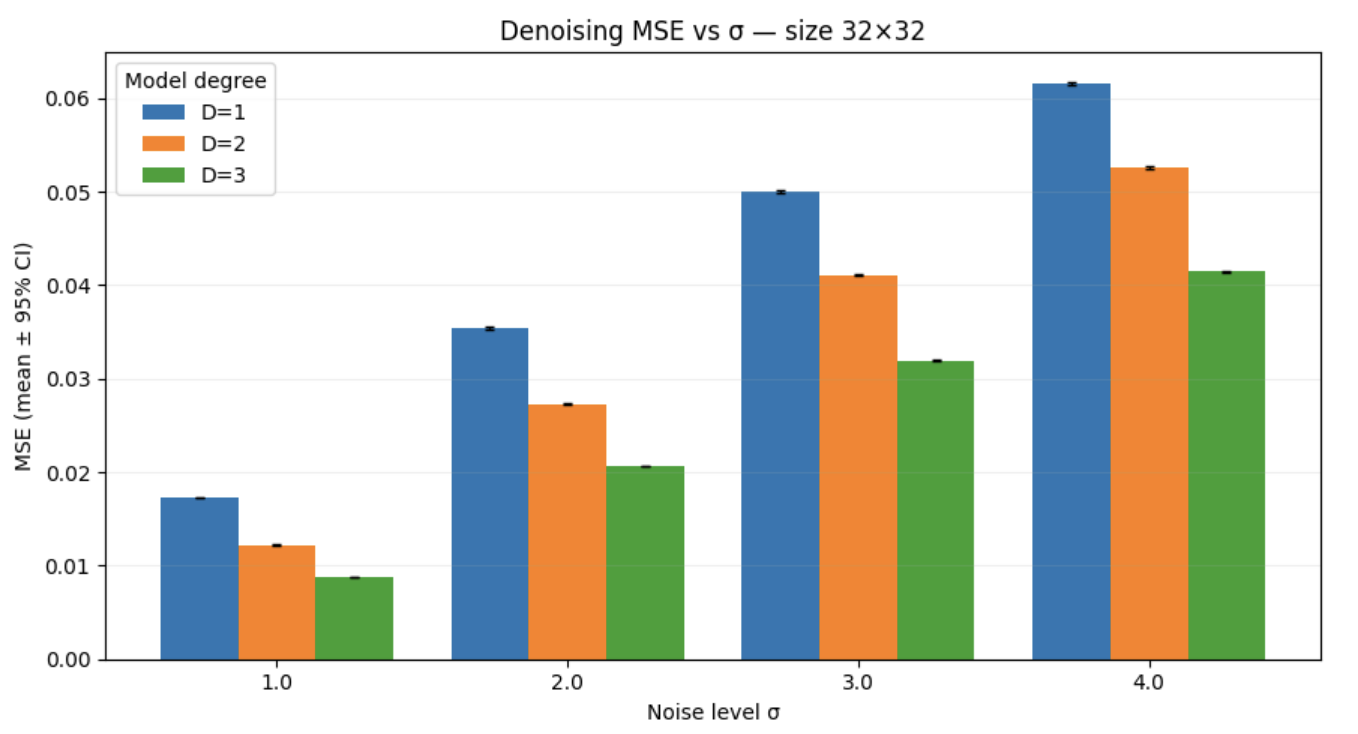}
        \caption{Local Coupling}
        \label{fig:local-32}
    \end{subfigure}

    \begin{subfigure}{0.96\linewidth}
        \centering
        \includegraphics[width= 0.99\linewidth]{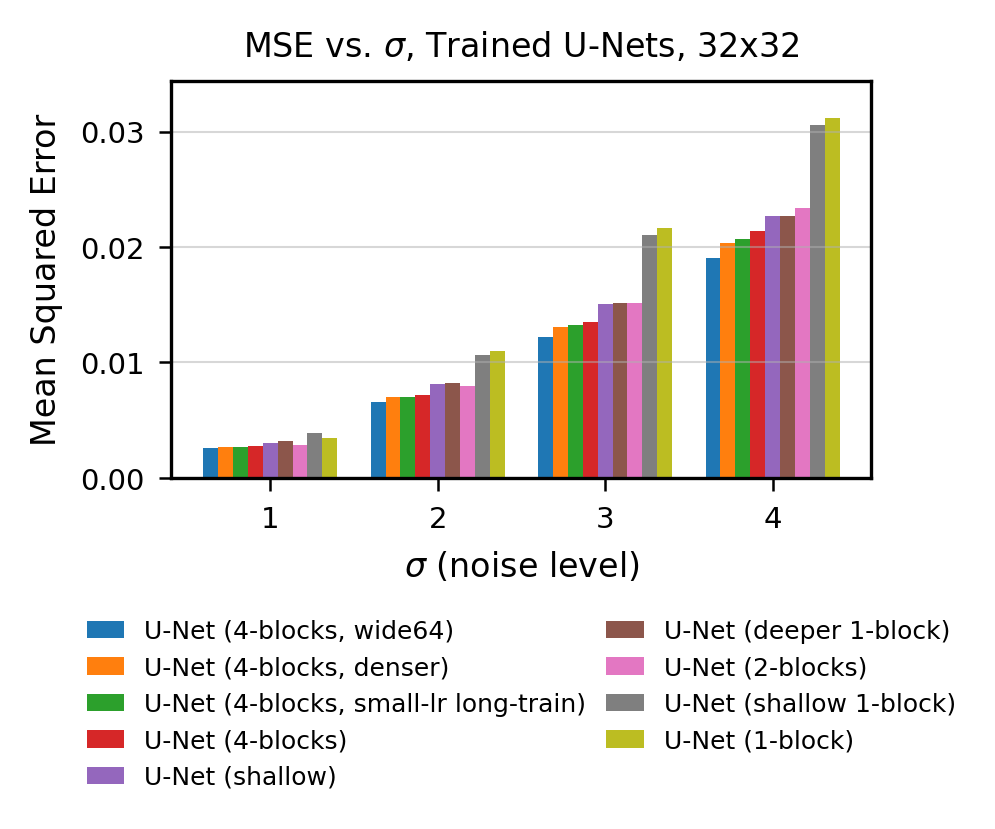}
        \caption{Trained Diffusion Models}
        \label{fig:models-32}
    \end{subfigure}
    
    \caption{\textbf{MNIST-32.} \textbf{(a)} Denoising MSE with independent monomial features across three sets of features: degrees 1, 2, \& 3. \textbf{(b)} Same features with band-tied coupling, and \textbf{(c)} local coupling. \textbf{(d)} Comparable performance of a variety of trained U-Net score function approximators.}
    \label{fig:32}
\end{figure}
\paragraph{Setup} We evaluate all models on two copies of the full MNIST training set of $60{,}000$ images, one set resized to $32 \times 32$ and the other resized to $64 \times 64$, denoted MNIST-32 and MNIST-64 respectively \citep{lecun_mnist}.

\paragraph{Changing the Degree of the Approximation} 
As shown in Figure~\ref{fig:32} and Figure~\ref{fig:64}, increasing the polynomial degree consistently improves reconstruction error by enhancing the expressivity of the feature space. The gains are especially pronounced for correlated models—most notably in the local coupling case (Figure~\ref{fig:local-32})—indicating that higher-order moments of the wavelet-projected data are particularly valuable when modeling correlation structure. Strikingly, this effect is amplified at larger image resolutions: comparing Figure~\ref{fig:32} with Figure~\ref{fig:64} suggests that our wavelet-based reconstruction becomes more robust as image size increases.

\paragraph{Detail Band Correlation} As shown in Figure~\ref{fig:32} and Figure~\ref{fig:64}, coupling the three orientations at a location improves visual quality (see Appendix~\ref{app:results}). Surprisingly, for both $32\times 32$ and $64\times 64$ images, independent features often achieve lower MSE than the band-tied counterpart. Qualitatively, however, edges and corners are sharper under band-tying, consistent with the hypothesis that cross-orientation co-activation carries most of the local predictive signal and is especially sensitive to sharp changes. This detail-band correlation can be viewed as a non-linear cross-channel interaction (e.g., quadratic products within a scale). At high noise levels ($\sigma\geq3$), the signal to noise ratio of detail coefficients drops, so cross-orientation terms act on products of noisy coefficients, inflating variance and inducing localized overshoot/undershoot that pixel-space MSE penalizes strongly. In short, channel mixing helps perceptual sharpness but can hurt MSE when the inputs are noise-dominated. This might suggest using noise-aware mixing (e.g., down-weighting cross-orientation features or increasing ridge at large $\sigma$), or complementing MSE with structure-sensitive metrics.

\begin{figure}
    \centering
    \begin{subfigure}{0.8\linewidth}
        \centering
        \includegraphics[width= 0.9\linewidth]{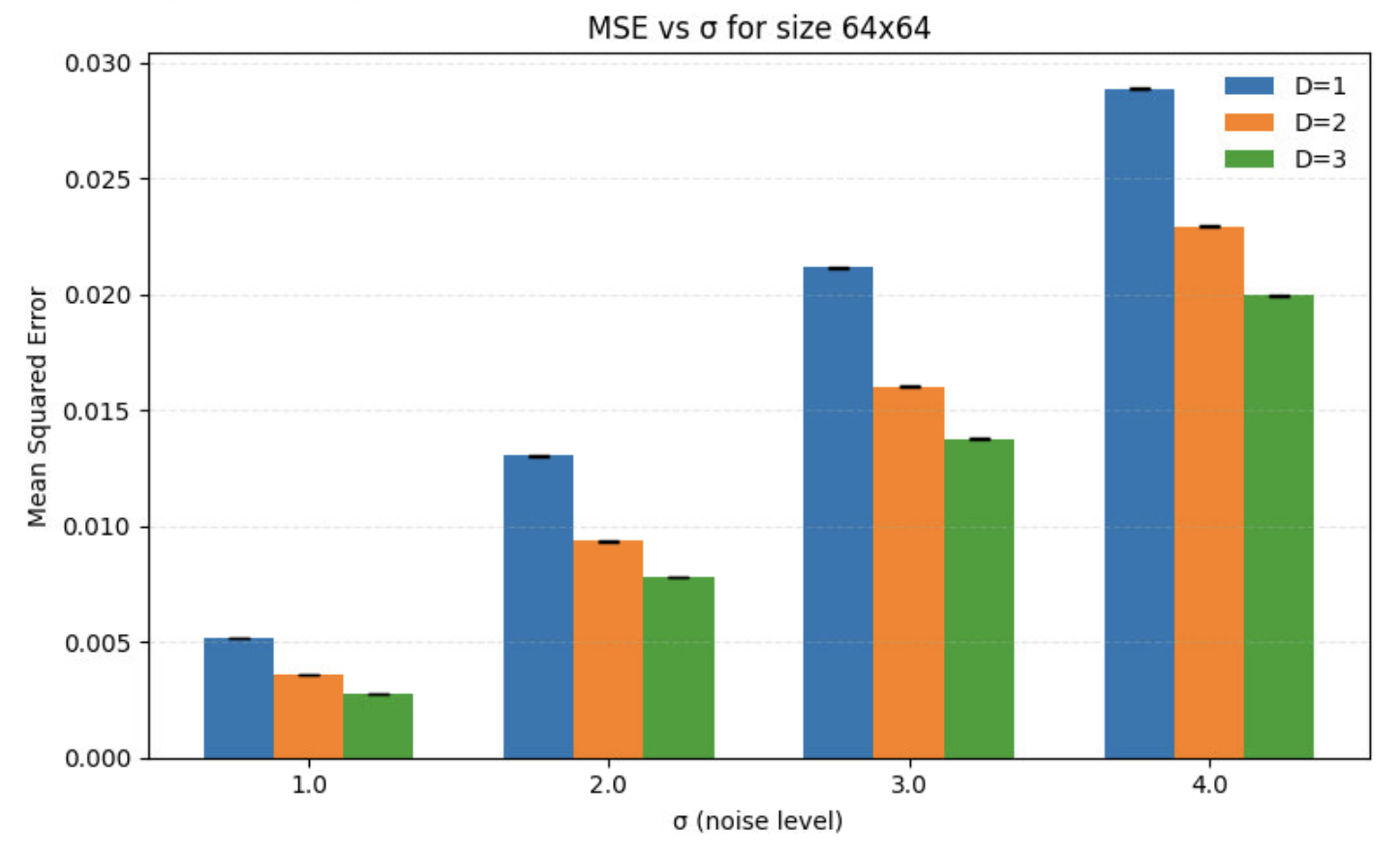}
        \caption{Independent Features}
        \label{fig:indep64}
    \end{subfigure}

    \begin{subfigure}{0.8\linewidth}
        \centering
        \includegraphics[width= 0.9\linewidth]{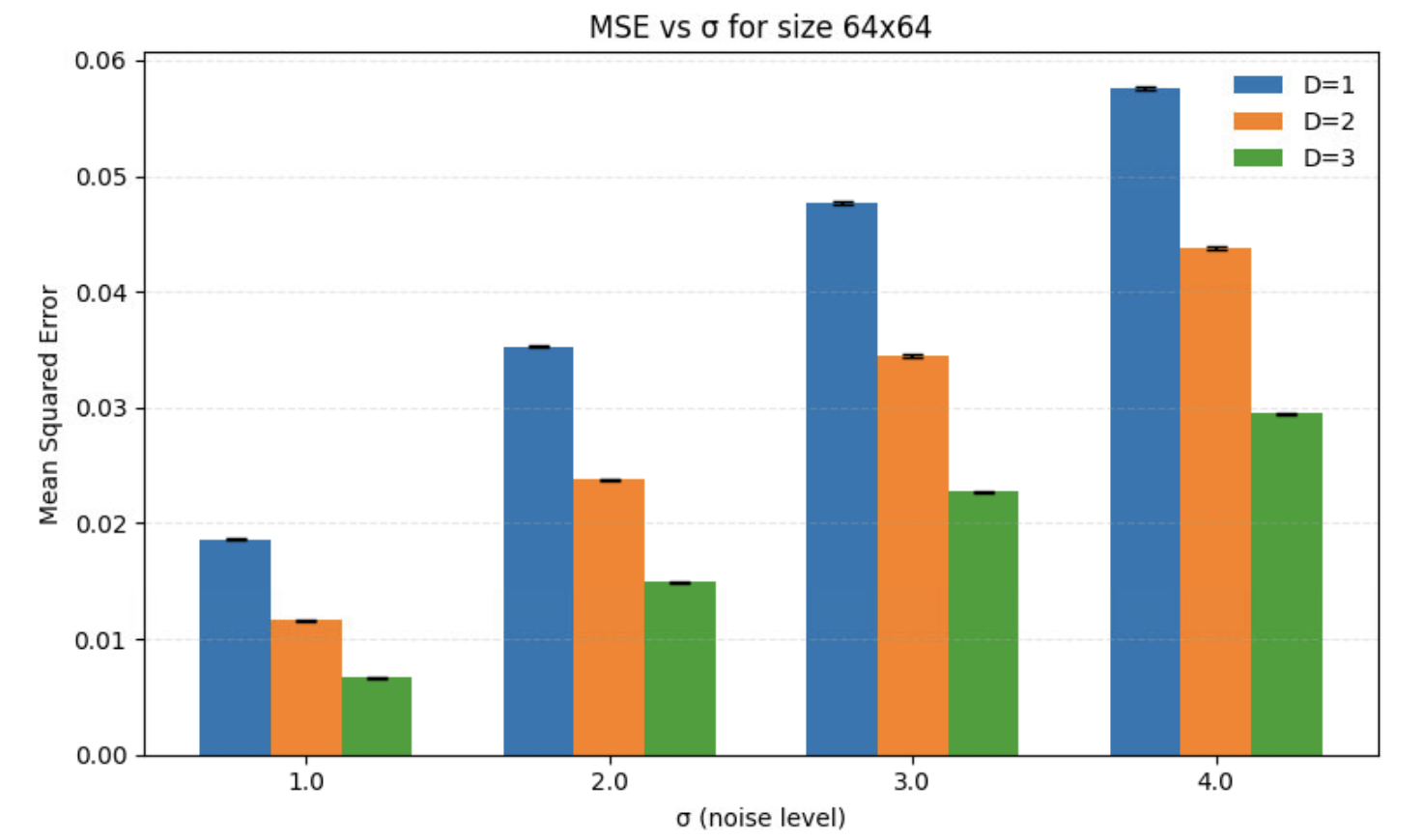}
        \caption{Band-Tied Coupling}
        \label{fig:band64}
    \end{subfigure}
    \caption{\textbf{MNIST-64.} Denoising MSE with \textbf{(a)} independent features, and \textbf{(b)} band-tied coupling on larger images. We see higher degree polynomials are more performant with larger images, compared with Fig. \ref{fig:32}.}
    \vspace{-5mm}
    \label{fig:64}
\end{figure}

\paragraph{Local Correlation}
Across image sizes (see Figure~\ref{fig:32} and Appendix~\ref{app:results}), we observe that introducing local coupling between neighboring coefficients yields a consistent and significant improvement in denoising MSE. This provides empirical support for prior mechanistic explanations of diffusion models \citep{niedoba2025mechanisticexplanationdiffusionmodel,kamb2024analytict}, which emphasize that local interactions are among the most predictive features available to a denoiser. Intuitively, wavelet coefficients corresponding to nearby locations tend to co-vary strongly, especially along edges, textures, and digit strokes \citep{NIPS1999_6a5dfac4, Simoncelli2001NaturalIS}. By capturing this short-range dependency, the model can recover local structure more effectively than when treating coefficients as independent. Notably, unlike detail-band correlation (which risks performance degradation at high noise due to channel mixing), local correlation maintains benefits across all $\sigma$ regimes, suggesting that spatial neighborhoods retain informative signal even when individual coefficients are noisy.

\paragraph{Changing Image Size} A key advantage of our approach is robustness to increasing image size and resolution. Unlike linear score approximations, the wavelet parameterization preserves locality and multiresolution structure: feature interactions remain confined to small neighborhoods and scales, so the effective design matrices retain near-block-diagonal structure and well-behaved conditioning as $H\times W$ grows. Empirically, the relative performance of our models improves at higher resolutions. Moving from $32\times 32$ to $64\times 64$ yields larger MSE reductions for the same polynomial degree, suggesting that additional scales provide genuinely useful predictive signal rather than simply inflating dimensionality (see Appendix~\ref{app:results}). 

\paragraph{Comparison to Trained Models}
As introduced by \citet{falck2023multiresolutionframeworkunetsapplications}, U-Nets can be viewed as implicitly learning a multiresolution (wavelet-like) representation. In our comparison to trained CNN/U-Net denoisers (Fig.~\ref{fig:models-32}), the learned models outperform our analytic wavelet denoisers across noise levels, as expected. However, two patterns are noteworthy. First, with local coupling and higher polynomial degree, the gap at low to moderate noise ($\sigma\in \{1,2\}$) narrows noticeably, indicating that much of the benefit of learned models arises from exploiting short-range, multiscale locality, which is the inductive bias encoded in our features. Second, as noise increases ($\sigma\ge 3$), the trained models retain a larger advantage, consistent with their capacity to leverage deeper nonlinearity, cross-scale mixing, and data-dependent priors learned during training. Crucially, our approach achieves these results with no gradient-based training and a single closed-form ridge solve per setting, yielding interpretable coefficients and stable behavior under changes in image size.

\section{FUTURE WORK AND CONCLUSION}
\subsection{Future Work}
While this paper is only a first step toward interpreting diffusion models through a wavelet decomposition of the score, it suggests a flexible framework for doing so. Our analytic construction makes it possible to isolate how specific moments, local dependencies, and architectural biases contribute to denoising behavior, and in turn points to several natural directions for future work. On the theoretical side, an important next step is to develop approximation guarantees and finite-sample error bounds for the proposed score estimators. On the modeling side, our experiments focus on a limited family of dependencies, leaving richer forms of multi-scale structure, cross-scale interactions, and other architectural biases to be explored. Experimentally, there are also many settings left to investigate. In particular, although the framework extends naturally to RGB and other multi-channel images, we do not include color-image experiments here. Moreover, we only test at modest image resolutions, whereas prior work such as \citep{Zhang_2025_CVPR} suggests that the advantages of wavelet representations may become more pronounced at very high resolutions. Finally, existing work such as \citep{kamb2024analytict} derives closed-form ideal score machines that allow direct, seed-by-seed comparison between a theoretical denoiser and the output of a learned diffusion model. Developing an analogous ideal score machine in the wavelet basis could provide further insight into the behavior of diffusion models.Finally, we suggest that expanding to video data would be a natural and interesting approach, especially given the natural role of wavelets in temporal modeling. 

\subsection{Conclusion}
We introduced an analytically tractable, wavelet–based parameterization of diffusion scores with closed-form normal equations and three structured dependency families (independent, band-tied, local). This framework isolates which attributes of the data distribution contribute to denoising across noise scales and resolutions. Empirically, higher polynomial degree improves accuracy, band-tying sharpens edges but can raise MSE at low signal to noise ratio, and local coupling yields the most reliable MSE gains across $\sigma$ and image sizes. The gap to trained CNN/U-Net denoisers narrows at low–moderate noise when locality is encoded explicitly, suggesting that much of their advantage stems from short-range, multiscale interactions that our analytic model captures without gradient-based training. The method is scalable (closed-form per setting), interpretable (moment-level diagnostics), and robust to increasing resolution. Because the score coefficients are solved from moments, the framework serves as a diagnostic tool: it quantifies the value of specific correlations (orientation co-activation vs.\ spatial neighborhoods), motivates noise-aware channel mixing and ridge schedules, and can inform data-efficient architectural choices and initialization schemes for learned models.

\section{ACKNOWLEDGMENTS}
This work was supported by the Kempner Institute for the Study of Natural and Artificial Intelligence at Harvard University. We thank the Kempner for access to compute resources. We are also grateful to the CRISP group at Harvard SEAS for many thoughtful conversations. EF thanks the Calvin Coolidge Presidential Foundation for support during her undergraduate studies.

\bibliography{ref}

\clearpage
\onecolumn
\section*{Checklist}
\begin{enumerate}
  \item For all models and algorithms presented, check if you include:
  \begin{enumerate}
    \item A clear description of the mathematical setting, assumptions, algorithm, and/or model. Yes, see Section 2.
    \item An analysis of the properties and complexity (time, space, sample size) of any algorithm. Yes, though since the point of the algorithim is interpretability note efficinecy, we only briefly discuss this in Appendix E. 
    \item (Optional) Anonymized source code, with specification of all dependencies, including external libraries. No
  \end{enumerate}

  \item For any theoretical claim, check if you include:
  \begin{enumerate}
    \item Statements of the full set of assumptions of all theoretical results. Yes, full statements and proofs are provided.
    \item Complete proofs of all theoretical results. Yes, some proofs are delayed until Appendix C.
    \item Clear explanations of any assumptions. Yes, I hope. 
  \end{enumerate}

  \item For all figures and tables that present empirical results, check if you include:
  \begin{enumerate}
    \item The code, data, and instructions needed to reproduce the main experimental results (either in the supplemental material or as a URL). Mostly yes, see Appendix B and the discussion of the implementation.
    \item All the training details (e.g., data splits, hyperparameters, how they were chosen). Yes, see Appendix B.
    \item A clear definition of the specific measure or statistics and error bars (e.g., with respect to the random seed after running experiments multiple times). Yes, see Section 4. 
    \item A description of the computing infrastructure used. (e.g., type of GPUs, internal cluster, or cloud provider). Yes, see Appendix B.
  \end{enumerate}

  \item If you are using existing assets (e.g., code, data, models) or curating/releasing new assets, check if you include:
  \begin{enumerate}
    \item Citations of the creator If your work uses existing assets. Yes, but we only use MNIST, a very standard dataset. 
    \item The license information of the assets, if applicable. Not applicable. 
    \item New assets either in the supplemental material or as a URL, if applicable. Not Applicable.
    \item Information about consent from data providers/curators. Not Applicable
    \item Discussion of sensible content if applicable, e.g., personally identifiable information or offensive content. Not Applicable.
  \end{enumerate}
  \item If you used crowdsourcing or conducted research with human subjects, check if you include:
  \begin{enumerate}
    \item The full text of instructions given to participants and screenshots. Not Applicable
    \item Descriptions of potential participant risks, with links to Institutional Review Board (IRB) approvals if applicable. Not Applicable.
    \item The estimated hourly wage paid to participants and the total amount spent on participant compensation. Not Applicable.
  \end{enumerate}
\end{enumerate}
\pagebreak
\newpage
\appendix
\onecolumn
\section{Additional Results}
\label{app:results}

\subsection{Comparison with Linear Baseline}
In Figure \ref{fig:linear_baseline} below, we compare our proposed framework with the simple linear baseline of \cite{wang2024unreasonableeffectivenessgaussianscore} (detailed in Section \ref{app:lin_denoiser_baseline}).  From this comparison, we see very clearly that the wavelet decomposition of the score approximator is most beneficial for higher resolution images (64x64, right) where our model significantly outperforms the linear counterpart. Conversely, on smaller images (32x32, left), the frequency-focused wavelet decomposition struggles to deal with the high frequency components induced by low spatial resolution. We note that, somewhat analogously, in prior work, it has been found that wavelet decompositions are helpful for diffusion models operating on high-resolution images, since wavelet coefficients compactly represent images as multi-channel, low-resolution feature maps \citep{hoogeboom2023simplediffusionendtoenddiffusion}. In future work, we intend to explore the applicability of our model to higher resolution images more thoroughly, as we think this may be a key area where this method has significant advantages over existing solutions.

\begin{figure}[h]
    \centering
    \begin{subfigure}{0.48\textwidth}
        \centering
        \includegraphics[width=\linewidth]{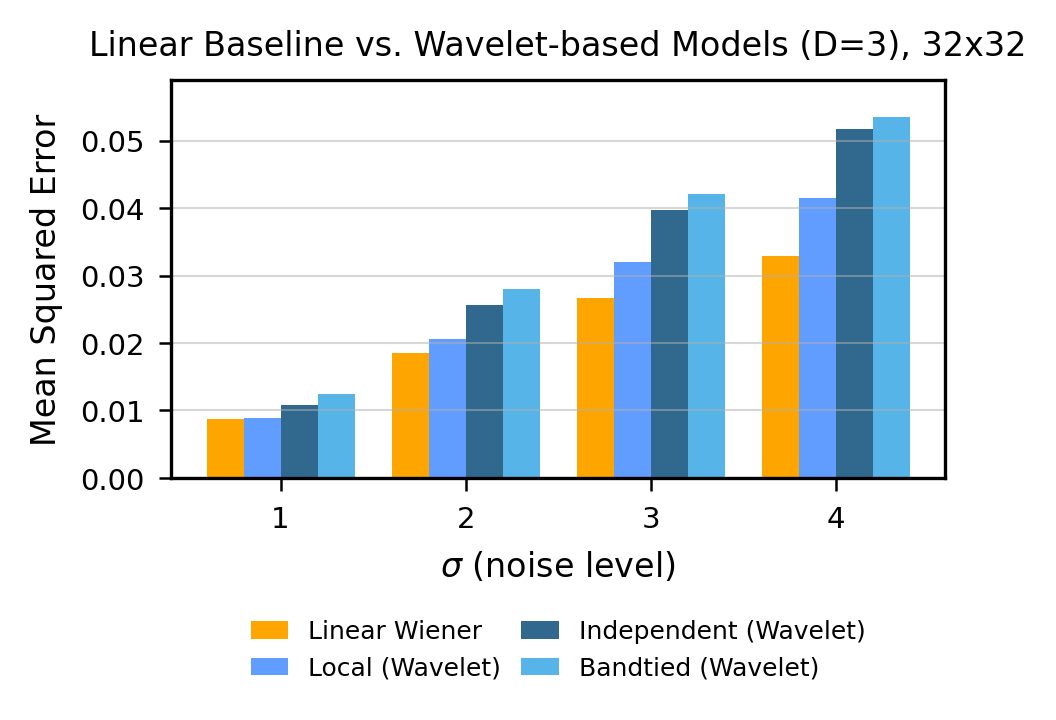}
        \caption{MNIST-32: Linear baseline vs. wavelet models (D=3).}
        \label{fig:linear_baseline_32}
    \end{subfigure}\hfill
    \begin{subfigure}{0.48\textwidth}
        \centering
        \includegraphics[width=\linewidth]{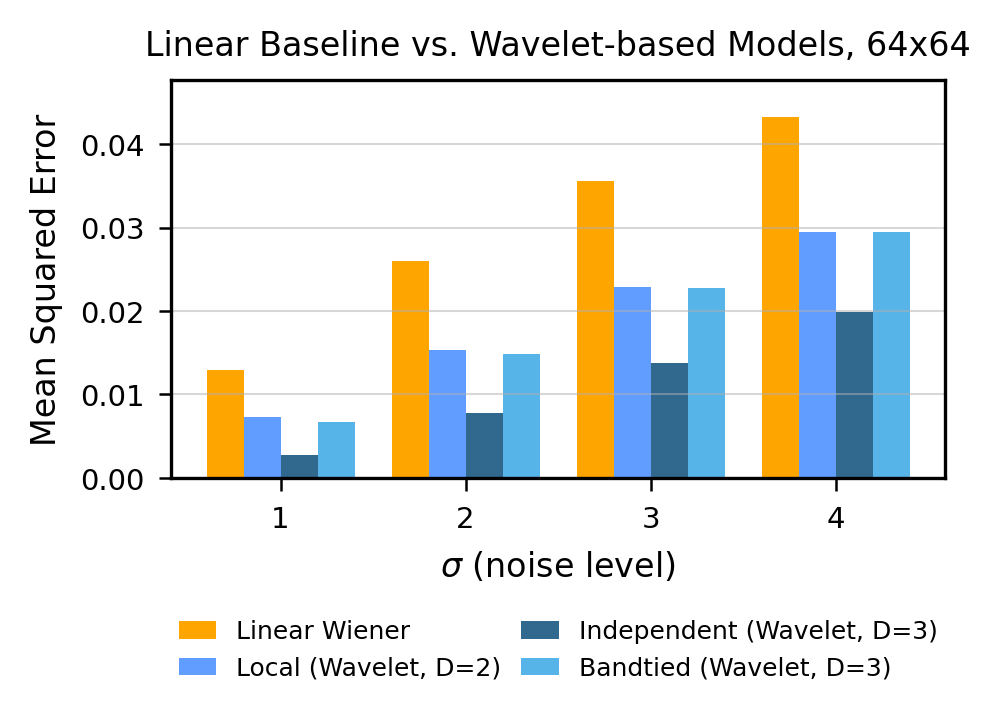}
        \caption{MNIST-64: Linear baseline vs. wavelet models (D=3).}
        \label{fig:linear_baseline_64}
    \end{subfigure}
    \caption{Comparison of linear baseline vs. wavelet models (D=3) on MNIST-32 and MNIST-64. We see the wavelet decomposition of the score significantly outperforms the linear baseline on larger images, while performing slightly worse on the smaller images.}
    \label{fig:linear_baseline}
    \vspace{-5mm}
\end{figure}

\subsection{Hermite Polynomial Results}
In Figure \ref{fig:hermite} below we also include results from an independent model expanded in terms of the probabilist's Hermite polynomials, as described in Section 3.  The Hermite expansion decouples features in expectation and delivers clear improvements as $D$ grows, with the notable exception of $D=3$. Results are plotted for both 32 by 32 images and 64 by 64 images. The setup is identical to that described in Section 3 and 4. 
\begin{figure}[h]
    \centering
    \begin{subfigure}{0.48\textwidth}
        \centering
        \includegraphics[width=\linewidth]{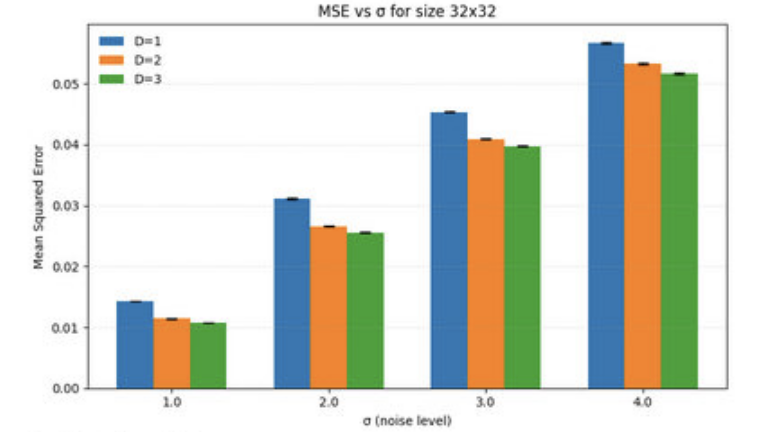}
        \caption{MNIST-32: Hermite Polynomials of degrees 1,2, and 3. }
        \label{fig:hermite_32}
    \end{subfigure}\hfill
    \begin{subfigure}{0.48\textwidth}
        \centering
    \includegraphics[width=\linewidth]{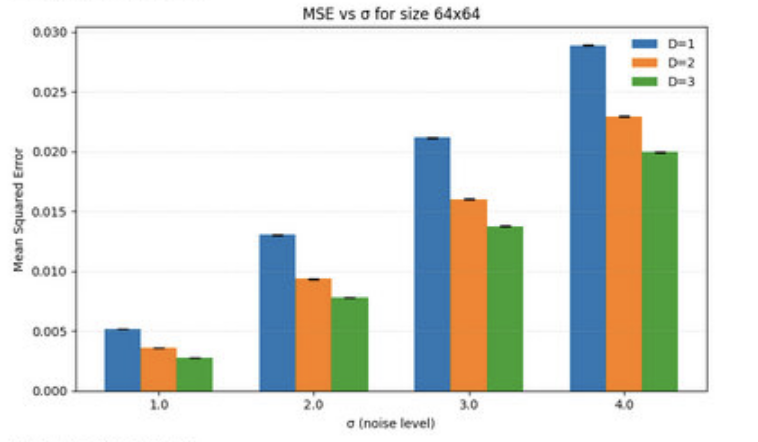}
        \caption{MNIST-64: Hermite Polynomials of degrees 1, 2, and 3.}
        \label{fig:hermite_64}
    \end{subfigure}
    \caption{Comparison of independent Hermite polynomials across different images sizes and degrees. Increasing the degree of the approximation is most helpful for larger images.}
    \label{fig:hermite}
    \vspace{-5mm}
\end{figure}

\subsection{Qualitative Denoising Results}

We also display qualitative denoising results below in Figure \ref{fig:gen_indep} for the independent model, and Figure\ref{fig:gen_band} for the band-tied model. We see that our model is able to denoise qualitatively well across a range of settings, and matches the quantitative results implying better performance for higher resolution images. 

\begin{figure}[htbp]
    \centering
    \includegraphics[width=0.7\textwidth]{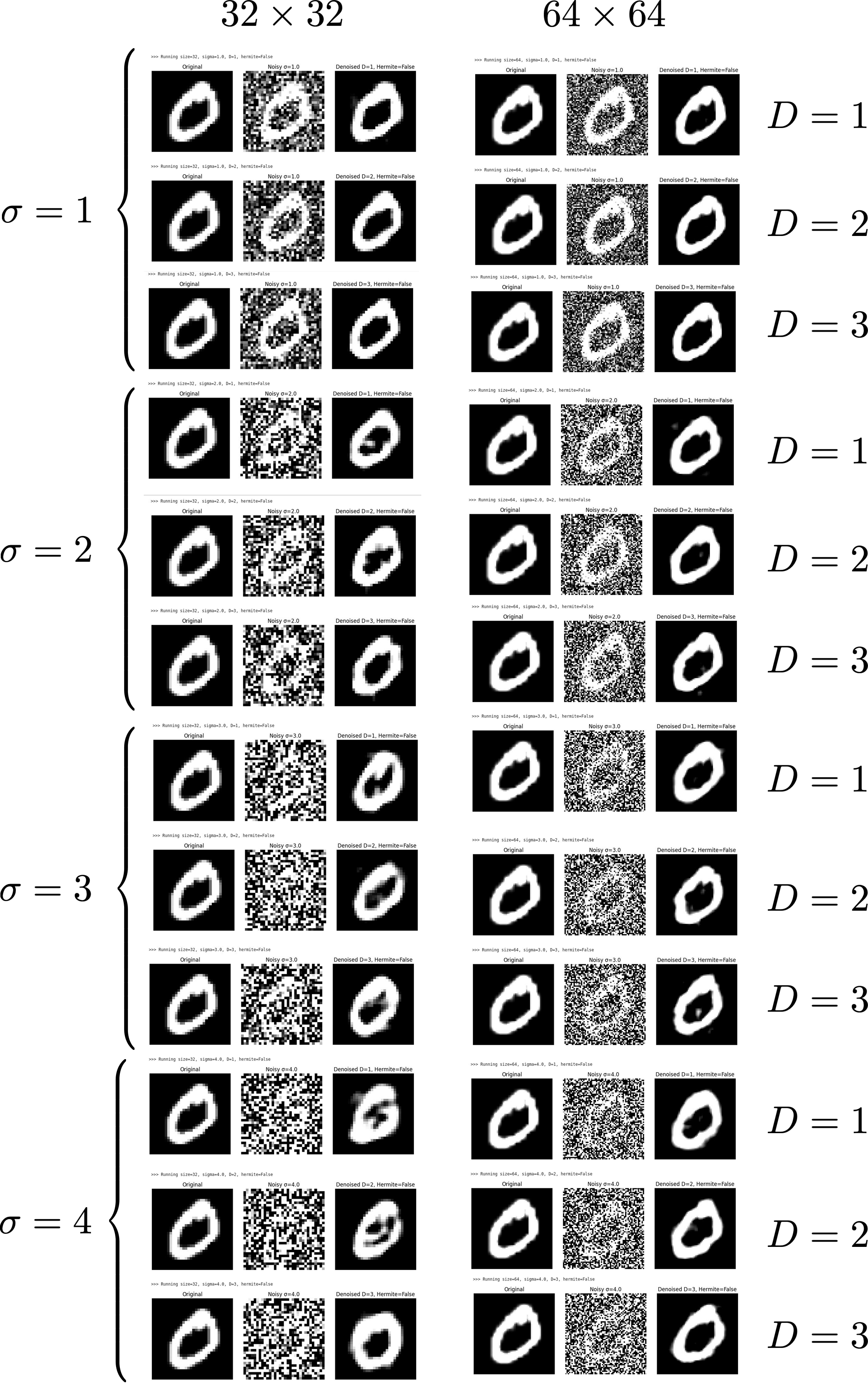}
    \caption{Denoising results for the independent monomial model on two image sizes (columns) and across ranges of noise levels (groups of 3 rows), for different degrees D.}
    \label{fig:gen_indep}
\end{figure}

\begin{figure}[htbp]
    \centering
    \includegraphics[width=0.4\textwidth]{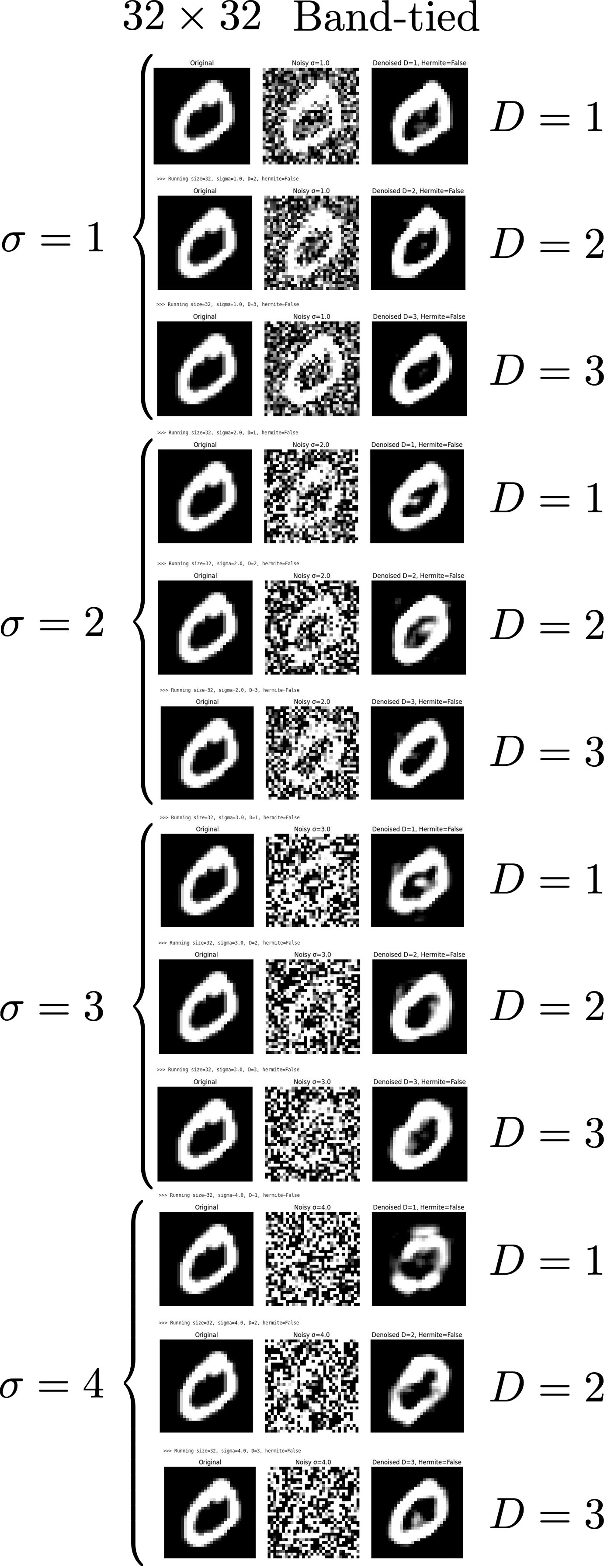}
    \caption{Denoising results for the band-tied model on 32 by 32 images, across ranges of noise levels (groups of 3 rows), for different degrees D.}
    \label{fig:gen_band}
\end{figure}

\subsubsection{Diffusion Model Generations}
\begin{figure}[htbp]
    \centering
    \begin{subfigure}[b]{0.4\textwidth}
        \centering
        \includegraphics[width=\textwidth]{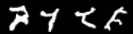}
        \caption{EDM, 1 block, no attention}
    \end{subfigure}
    
    \vspace{0.3em}
    \begin{subfigure}[b]{0.4\textwidth}
        \centering
        \includegraphics[width=\textwidth]{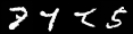}
        \caption{EDM, 2 blocks, no attention}
    \end{subfigure}
    
    \vspace{0.3em}
    \begin{subfigure}[b]{0.4\textwidth}
        \centering
        \includegraphics[width=\textwidth]{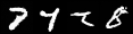}
        \caption{EDM, 4 blocks, no attention}
    \end{subfigure}
    
    \vspace{0.3em}
    \begin{subfigure}[b]{0.4\textwidth}
        \centering
        \includegraphics[width=\textwidth]{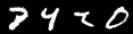}
        \caption{EDM, 4 blocks, denser}
    \end{subfigure}
    
    \vspace{0.3em}
    \begin{subfigure}[b]{0.4\textwidth}
        \centering
        \includegraphics[width=\textwidth]{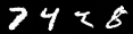}
        \caption{EDM, 4 blocks, small LR, long training}
    \end{subfigure}
    
    \vspace{0.3em}
    \begin{subfigure}[b]{0.4\textwidth}
        \centering
        \includegraphics[width=\textwidth]{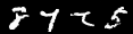}
        \caption{EDM, 4 blocks, width 64}
    \end{subfigure}
    
    \vspace{0.3em}
    \begin{subfigure}[b]{0.4\textwidth}
        \centering
        \includegraphics[width=\textwidth]{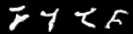}
        \caption{EDM, shallow 1 block}
    \end{subfigure}
    
    \vspace{0.3em}
    \begin{subfigure}[b]{0.4\textwidth}
        \centering
        \includegraphics[width=\textwidth]{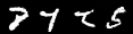}
        \caption{EDM, shallow}
    \end{subfigure}
    
    \vspace{0.3em}
    \begin{subfigure}[b]{0.4\textwidth}
        \centering
        \includegraphics[width=\textwidth]{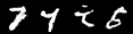}
        \caption{EDM, deeper 1 block}
    \end{subfigure}
    \caption{MNIST samples generated by different UNet-CNN EDM diffusion model variants. Each row shows images from one model configuration.}
    \label{fig:mnist_edm_models_vertical}
\end{figure}

\begin{figure}[htbp]
    \centering
    \begin{subfigure}[b]{0.4\textwidth}
        \centering
        \includegraphics[width=\textwidth]{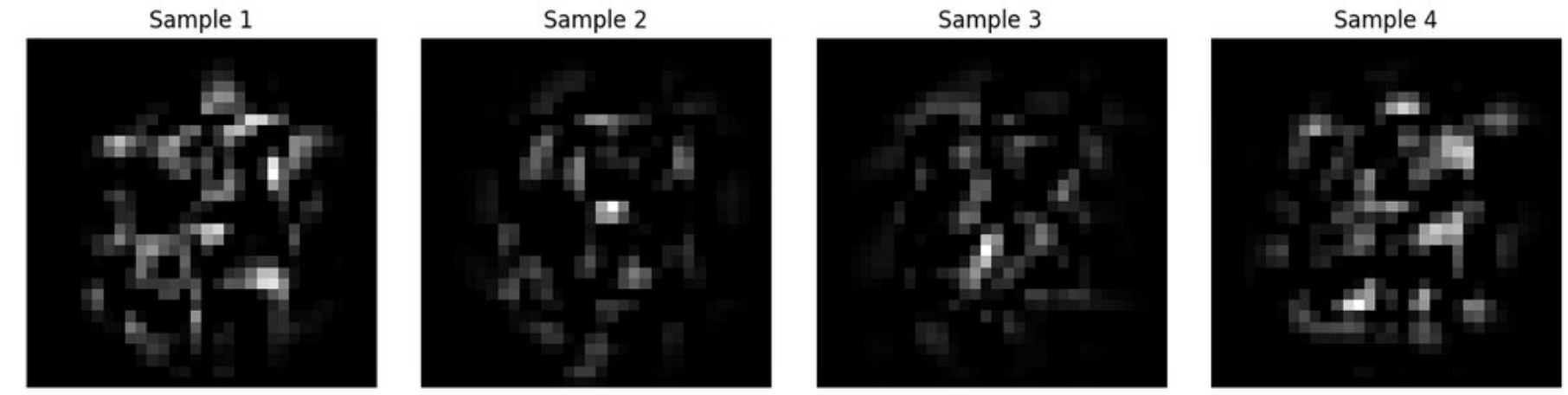}
        \caption{Independent, degree 1}
    \end{subfigure}
    \hfill
    \begin{subfigure}[b]{0.4\textwidth}
        \centering
        \includegraphics[width=\textwidth]{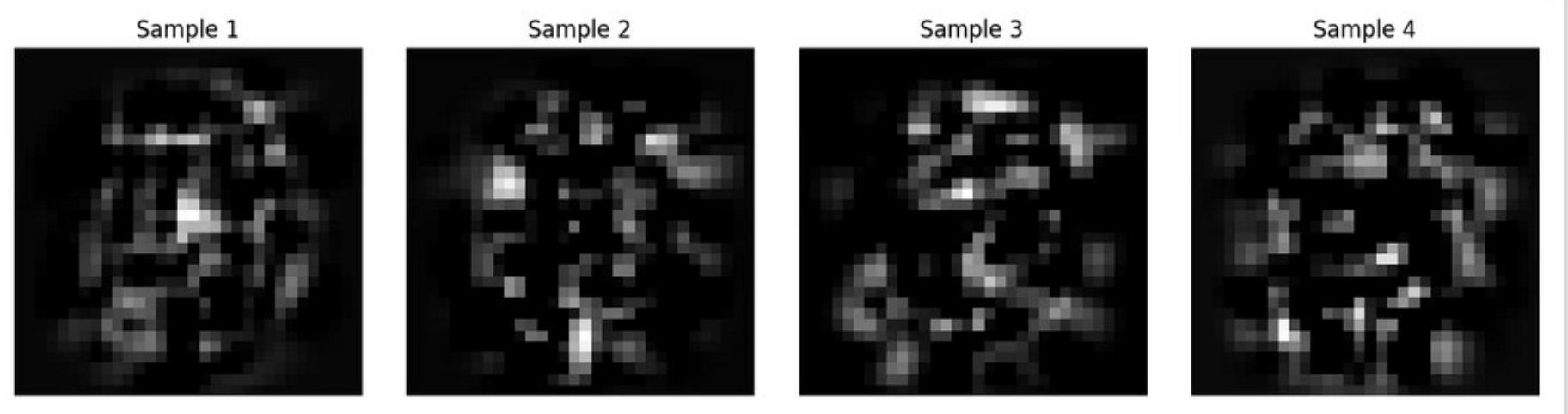}
        \caption{Band, degree 1}
    \end{subfigure}
    
    \vspace{0.5em}
    
    \begin{subfigure}[b]{0.4\textwidth}
        \centering
        \includegraphics[width=\textwidth]{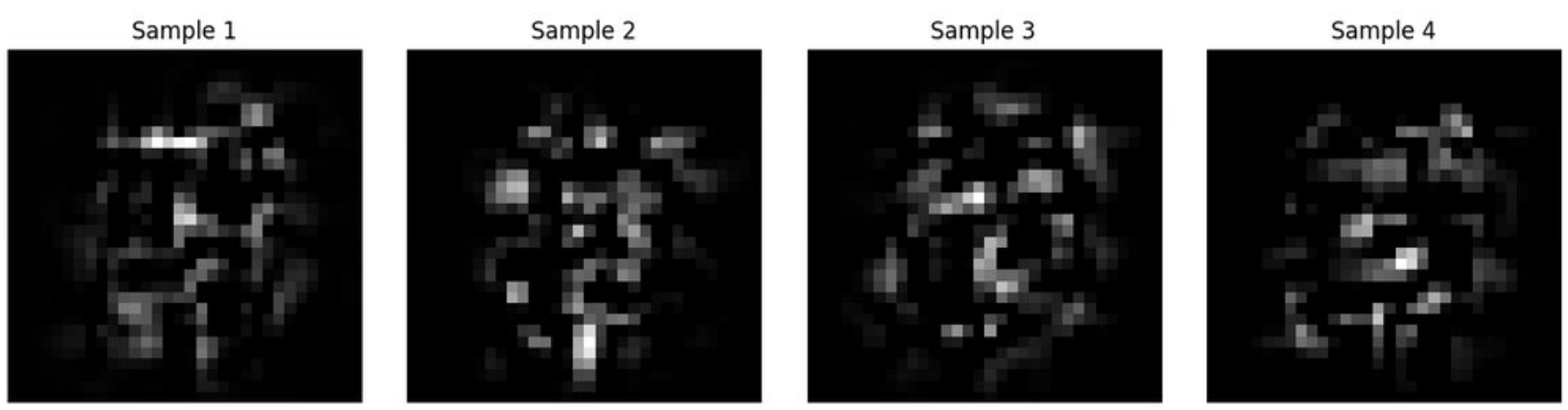}
        \caption{Independent, degree 2}
    \end{subfigure}
    \hfill
    \begin{subfigure}[b]{0.4\textwidth}
        \centering
        \includegraphics[width=\textwidth]{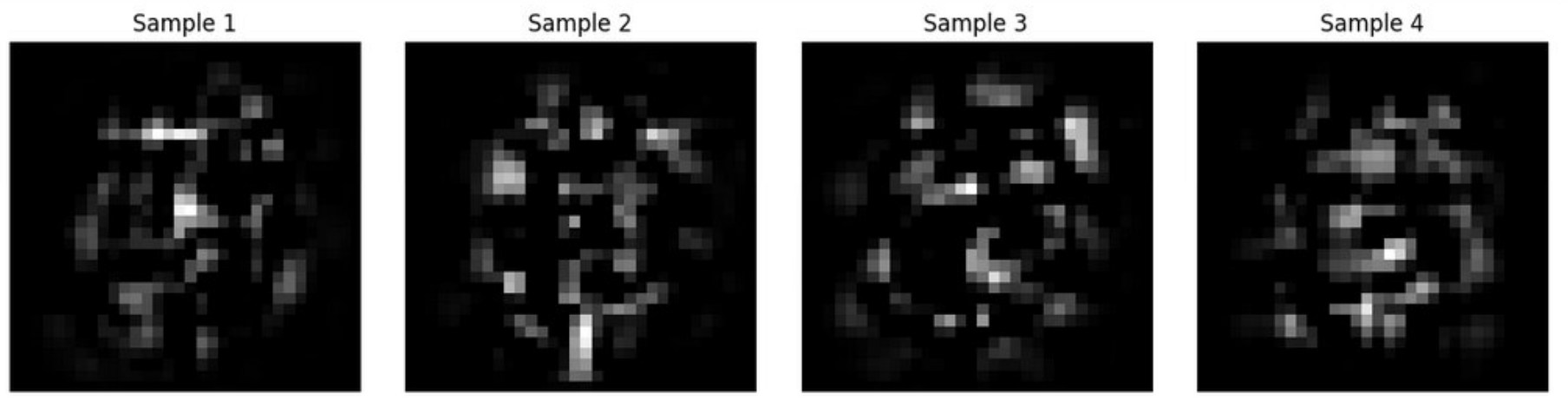}
        \caption{Band, degree 2}
    \end{subfigure}
    
    \vspace{0.5em}
    
    \begin{subfigure}[b]{0.4\textwidth}
        \centering
        \includegraphics[width=\textwidth]{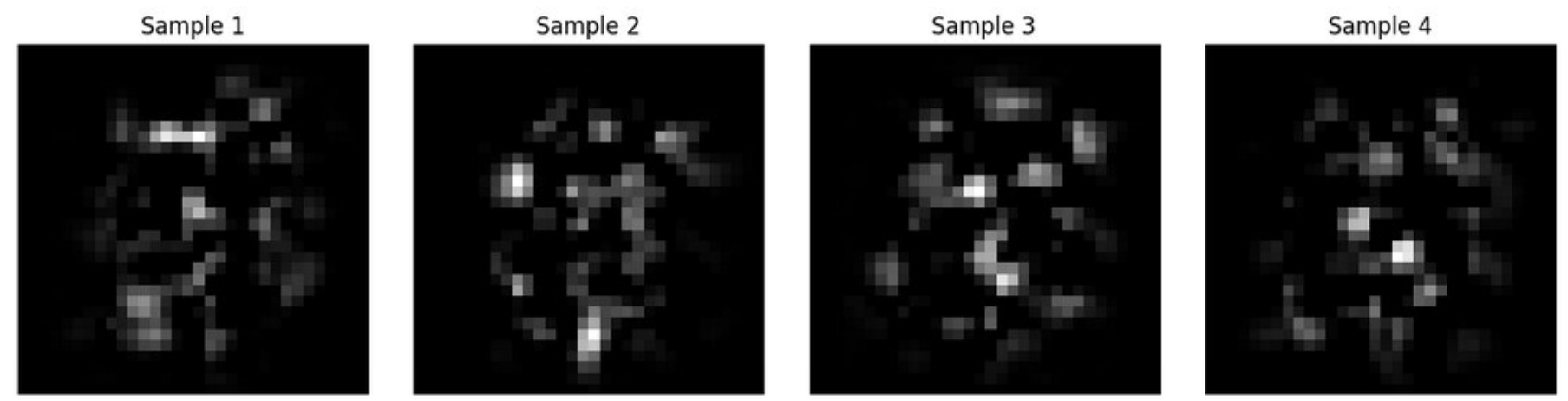}
        \caption{Independent, degree 3}
    \end{subfigure}
    \hfill
    \begin{subfigure}[b]{0.4\textwidth}
        \centering
        \includegraphics[width=\textwidth]{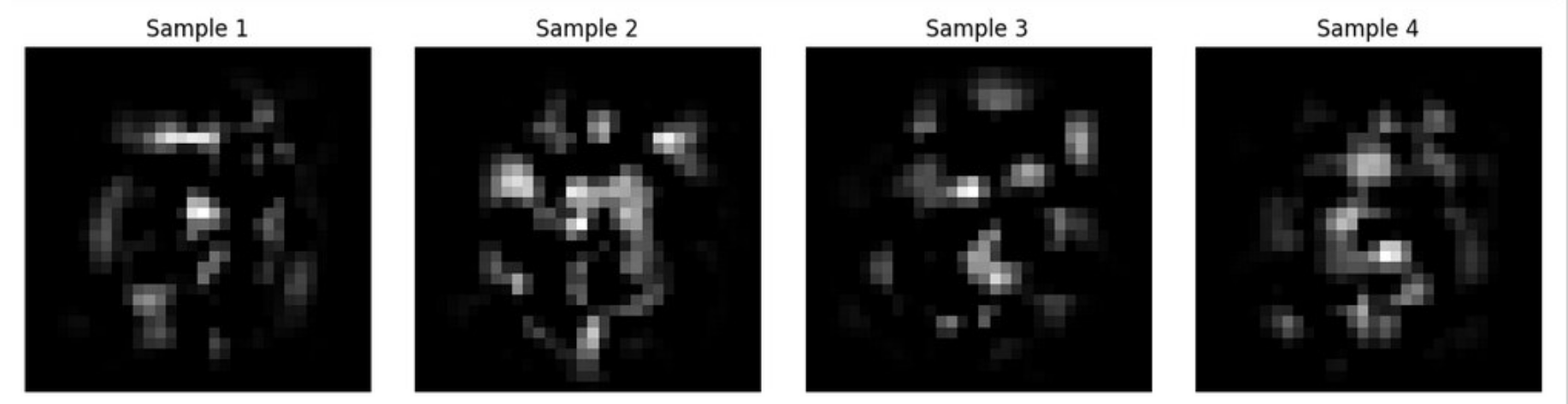}
        \caption{Band, degree 3}
    \end{subfigure}
    
    \caption{Generated samples for Independent vs. Band models at varying polynomial degrees. Each column shows one dataset; each row corresponds to a degree}
    \label{fig:indep_band_degree_comparison}
\end{figure}

While the overall quality of the generated images is low, the broad shape of features is promising, especially in the band tied high degree cases. However, the wavelet-based diffusion process appears to capture some coarse structural patterns. In particular, the band-tied, higher-degree models show a more coherent emergence of organized intensity regions, suggesting that the model begins to internalize meaningful global dependencies. These results indicate that, despite the pixel-level noise, the generative process is learning the broad spatial organization of features, which could serve as a foundation for higher-fidelity synthesis with improved architectures or longer training.

\section{Training Details}
\label{app:training}
\subsection{UNet denoiser baseline details}

\paragraph{Preconditioning and objective}
We followed the same preconditioning and loss weighting scheme from \cite{karras2022edm}, with the code below. 

\begin{RoundedListing}
class EDMPrecondWrapper(nn.Module):
    def __init__(self, model, sigma_data=0.5, sigma_min=0.002, sigma_max=80, rho=7.0):
        super().__init__()
        self.model = model
        self.sigma_data = sigma_data
        self.sigma_min = sigma_min
        self.sigma_max = sigma_max
        self.rho = rho
        
    def forward(self, X, sigma, cond=None, ):
        sigma[sigma == 0] = self.sigma_min
        ## edm preconditioning for input and output
        ## https://github.com/NVlabs/edm/blob/main/training/networks.py#L632
        # unsqueze sigma to have same dimension as X (which may have 2-4 dim) 
        sigma_vec = sigma.view([-1, ] + [1, ] * (X.ndim - 1))
        c_skip = self.sigma_data ** 2 / (sigma_vec ** 2 + self.sigma_data ** 2)
        c_out = sigma_vec * self.sigma_data / (sigma_vec ** 2 + self.sigma_data ** 2).sqrt()
        c_in = 1 / (self.sigma_data ** 2 + sigma_vec ** 2).sqrt()
        c_noise = sigma.log() / 4
        model_out = self.model(c_in * X, c_noise, cond=cond)
        return c_skip * X + c_out * model_out
\end{RoundedListing}

\begin{RoundedListing}
class EDMLoss:
    def __init__(self, P_mean=-1.2, P_std=1.2, sigma_data=0.5):
        self.P_mean = P_mean
        self.P_std = P_std
        self.sigma_data = sigma_data

    def __call__(self, net, X, labels=None, ):
        rnd_normal = torch.randn([X.shape[0],] + [1, ] * (X.ndim - 1), device=X.device)
        # unsqueeze to match the ndim of X
        sigma = (rnd_normal * self.P_std + self.P_mean).exp()
        weight = (sigma ** 2 + self.sigma_data ** 2) / (sigma * self.sigma_data) ** 2
        # maybe augment
        n = torch.randn_like(X) * sigma
        D_yn = net(X + n, sigma, cond=labels, )
        loss = weight * ((D_yn - X) ** 2)
        return loss
\end{RoundedListing}

\paragraph{Model variants and training hyperparameters}
We adopted the U-Net architecture from \cite{song2019generative} as the backbone and systematically varied its architectural hyperparameters. Specifically, we trained U-Nets using the EDM objective on MNIST, varying the number of \texttt{blocks} and the base width (\texttt{model\_channels}). Each encoder block halves the spatial resolution and multiplies the number of channels by \texttt{channel\_mult}, while each decoder block performs the inverse upscaling by $\times 2$. We fixed \texttt{layers\_per\_block=1}, so each block contains one residual layer. The \texttt{channel\_mult} tuple defines the width scaling across resolution levels—for example, with \texttt{base\_channels=16} and \texttt{channel\_mult=(1, 2, 3, 4)}, the block widths are 16, 32, 48, and 64. Unless otherwise noted, all models use \texttt{batch\_size=2048}, \texttt{attn\_resolutions=[]}, and \texttt{decoder\_init\_attn=False}, i.e., no attention is applied in the U-Net.

For training, we used Adam optimizer and constant learning rate without scheduling. 


\begin{table}[!ht]
\centering
\caption{\textbf{UNet-EDM architectures and training schedules.} \\
Shared hyperparameters: \texttt{layers\_per\_block=1}, \texttt{decoder\_init\_attn=False}, \texttt{attn\_resolutions=[]} 
}
\label{tab:mnist-edm-variants}
\setlength{\tabcolsep}{4pt}
\begin{tabular}{lccccccr}
\toprule
\textbf{Experiment} & \textbf{Blocks} & \textbf{Base\_ch} & \textbf{ch\_mult} & \textbf{Steps} & \textbf{Batch} & \textbf{LR} & \textbf{Train time} \\
\midrule
\texttt{4blocks\_noattn} & 4 & 16 & 1,2,3,4 & 10k & 2048 & 1e-3 & $\sim$25 min \\
\texttt{4blocks\_noattn\_smalllr\_longtrain} & 4 & 16 & 1,2,3,4 & 50k & 2048 & 1e-4 & $\sim$2h 4min \\
\texttt{4blocks\_noattn\_denser} & 4 & 16 & 1,2,3,4 & 10k & 2048 & 1e-3 & $\sim$37 min \\
\texttt{4blocks\_wide64\_noattn} & 4 & 64 & 1,2,3,4 & 10k & 2048 & 1e-3 & $\sim$77 min \\
\texttt{1block\_noattn} & 1 & 16 & 1 & 10k & 2048 & 1e-3 & $\sim$28 min \\
\texttt{1block\_wide128\_noattn}$^*$ & 1 & 32 & 1 & 10k & 2048 & 1e-3 & $\sim$17 min \\
\texttt{2blocks\_noattn} & 2 & 16 & 1,2 & 10k & 2048 & 1e-3 & $\sim$22 min \\
\bottomrule
\end{tabular}
\end{table}

\subsection{Linear denoiser baseline details}\label{app:lin_denoiser_baseline}
As another baseline, we computed the optimal linear denoiser, which captures the Gaussian statistics of the dataset. 
We computed the empirical mean and covariance of the dataset $\mu,\Sigma$.  
Then we used the following formula for linear denoiser \citep{wang2024unreasonableeffectivenessgaussianscore}, 
\begin{align}
    \mathbf{D}_{lin}(\mathbf{x},\sigma)&=\mu+\Sigma(\Sigma+\sigma^2I)^{-1}(\mathbf{x}-\mu)\\
    &=\mu+U\text{diag}\left(\frac{\lambda_k}{\lambda_k+\sigma^2}\right)U^\top(\mathbf{x}-\mu)
\end{align}
where $\Sigma=U\Lambda U^\top$ and $\Lambda=\text{diag}(\lambda_1,\lambda_2,...)$ is the eigen decomposition of the data covariance. In practice, we precomputed the eigendecomposition to avoid explicit matrix inverse.

\subsection{Computing infrastructure}\label{app:compute_resources}
For all DNN training runs and numerical simulation of the theory, we used an academic cluster with A100, H100 GPUs. Generally the UNet training runs on MNIST finish from 25mins to 2hrs on one A100 GPU. 

\section{Additional Proofs}

\subsection{Proof of Equivalence of Code and Math}
\label{app:math}
\paragraph{Notation}
Let $x \in \mathbb{R}^d$ be an image, with $d = H \times W$.  
Let $\{ \omega_j \}_{j \in \mathcal{I}} \subset \mathbb{R}^d$ be an orthonormal wavelet basis with matrix 
$B \in \mathbb{R}^{d \times d}$, where each column is a wavelet $\omega_j$.  
Then $B^\top B = I$. We note that in practice this wavelet basis $B \in \mathbb R^{D \times d}$ is over-complete and therefore not exactly orthogonal, but only approximately so. In our particular case, we compute 
$\|B B^\top- I\|_2 = 0.01201$ and $\|B^\top B - I\|_2 =  0.01346$, which confirms in our case that $B$ is approximately, though not exactly, orthonormal. 

We corrupt images as
\[
X_t = X_0 + \sigma Z, \quad Z \sim \mathcal{N}(0, I_d).
\]
In the code, we use a shared feature map
\[
\Phi(x) \in \mathbb{R}^p
\]
across all output coordinates. The denoiser network output is a linear readout over the features
\[
\widehat{X}_0(x) = \Phi(x) W, \quad W \in \mathbb{R}^{p \times d},
\]
where $p$ is the number of features.

\subsubsection*{Lemma 1}
\begin{lemma}[The pixel-level denoiser is the same as the wavelet denoiser]
Let $W' = W B^\top$. Then for any $W \in \mathbb{R}^{p \times d}$, we have
\[
\mathcal{L}_{\text{denoiser}}(W)
= \mathcal{L}_{\text{wavelet}}(W'),
\]
i.e. $W^\star$ minimizes the denoiser loss if and only if $W'^\star = W^\star B^\top$ minimizes the wavelet loss.
\end{lemma}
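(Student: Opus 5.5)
The plan is to prove the identity $\mathcal{L}_{\text{denoiser}}(W)=\mathcal{L}_{\text{wavelet}}(WB^\top)$ pointwise in $W$, and then read off the statement about minimizers. Here the denoiser loss is $\mathbb{E}\|\Phi(X_t)W - X_0\|^2$, with row vectors as in the Methods section. The wavelet loss is the same objective written in coefficient space, $\mathbb{E}\|\Phi(X_t)W' - X_0B\|^2$: the targets are the wavelet coefficients $X_0B=(\langle X_0,\omega_j\rangle)_j$ of the clean image, and $W'$ is the readout from features to coefficients. The whole argument rests on orthonormality, $B^\top B = BB^\top = I_d$. For square $B$, $B^\top B=I$ forces $BB^\top=I$.

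\textbf{Step 1 (isometry).} For any row vector $v\in\mathbb{R}^d$, $\|vB\|^2 = vBB^\top v^\top = \|v\|^2$. I apply this with $v=\Phi(X_t)W - X_0$ and get
\[
\|\Phi(X_t)W - X_0\|^2=\|\Phi(X_t)WB - X_0B\|^2 .
\]
Taking expectations under the joint law of $(X_0,X_t)$, with finite second moments assumed so that both sides are finite, gives $\mathcal{L}_{\text{denoiser}}(W)=\mathcal{L}_{\text{wavelet}}(WB)$.

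\textbf{Step 2 (matching the convention).} The lemma writes $W'=WB^\top$, so I have to fix which orientation of $B$ maps pixels to coefficients. In the Methods section coefficients are computed as $c=X_tB^\top$, with rows of $B$ being atoms. With that convention the coefficient target is $X_0B^\top$ and the isometry uses $B^\top B=I$, which gives exactly $W'=WB^\top$. With the appendix convention, where columns of $B$ are atoms, the map is $W\mapsto WB$. In both cases the map is right-multiplication by an orthogonal matrix, and nothing else in the argument changes. If a ridge term is included, it is also preserved, since $\|WB^\top\|_F=\|W\|_F$.

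\textbf{Step 3 (minimizers).} The map $W\mapsto W'=WB^\top$ is a linear bijection of $\mathbb{R}^{p\times d}$ with inverse $W'\mapsto W'B$. The two losses agree under this reparametrization, so $W^\star$ minimizes the denoiser loss if and only if $W^\star B^\top$ minimizes the wavelet loss. The predictions also agree, because $\Phi(x)W^\star = \Phi(x)W'^\star B$ is synthesis from the predicted coefficients. This matches the formal lemma in the main text, with $R=B$.

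\textbf{Main obstacle.} The algebra is routine. The real difficulty is that the implemented $B$ is over-complete and only approximately orthonormal, so the exact identity fails in practice. To cover that case I would add a perturbation remark. Write $E=B^\top B-I$ in the over-complete setting. Then the two losses differ by $\mathbb{E}[r E r^\top]$, where $r$ is the residual, and this is at most $\|E\|_2\,\mathcal{L}$. With the reported value $\|E\|_2\approx 0.013$, the losses agree up to a relative error of about $1.3\%$. For exact equivalence I would instead use the left-inverse formulation from the formal lemma, solving in coefficient space and synthesizing with $R$.
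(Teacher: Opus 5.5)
Your proposal is correct and follows the same route as the paper's proof: the norm is preserved under right-multiplication by the orthonormal $B^\top$, the ridge term is invariant because $\|WB^\top\|_F=\|W\|_F$, and you then take expectations. You are more careful than the paper, which writes the chain with $\approx$ and leaves the minimizer correspondence implicit; you state the exact identity under $B^\top B=BB^\top=I$ and derive the ``iff'' from the bijection $W\mapsto WB^\top$.
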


\begin{proof}
The denoiser loss is
\[
\mathcal{L}_{\text{denoiser}}(W)
= \mathbb{E}\big[\| \Phi(X_t) W - X_0 \|_2^2\big] + \gamma \| W \|_F^2.
\]
The wavelet loss is
\[
\mathcal{L}_{\text{wavelet}}(W')
= \mathbb{E}\big[\| \Phi(X_t) W' - B X_0 \|_2^2\big]
+ \gamma \| W' \|_F^2.
\]
We say that $B$ is approximately orthonormal if $\|B^TB - I\|_2 \leq \epsilon$ and $\|BB^T - I\|_2 \leq \epsilon$ for some $\epsilon>0$. 
Since $B$ is (in practice, only approximately) orthonormal,
\begin{align*}
\| \Phi(X_t) W - X_0 \|_2^2
&\approx \| (\Phi(X_t) W - X_0) B^\top \|_2^2 \\
&\approx \| \Phi(X_t) W B^\top - X_0 B^\top \|_2^2 \\
&\approx \| \Phi(X_t) W' - B X_0 \|_2^2.
\end{align*}
The ridge term is also invariant: $\|W\|_F^2 = \|W'\|_F^2$.  
Taking expectations completes the proof. 
\end{proof}

\medskip
\begin{remark}
By Tweedie’s formula, the learned $\widehat{S}_W$ is the best (in $L^2$ sense) projection of the true score onto the following function space, 
\[
\mathcal{F} = \left\{ x \mapsto \frac{x}{\sigma^2} - \frac{\Phi(x) W}{\sigma^2} \right\}.
\]
\end{remark}

\subsubsection*{Lemma 2}
\begin{lemma}[Score coefficients from the code coincide with coefficient-wise regressions.]
Define the coefficient $c_i^{(t)}(x)$ by
\[
c_i^{(t)}(x)
= \langle \widehat{S}_W(x), \omega_i \rangle
= \frac{\langle x, \omega_i \rangle}{\sigma^2}
- \frac{\Phi(x) W_i'}{\sigma^2}.
\]
The coefficient-wise loss is
\[
\mathcal{L}_{\text{score}}(\alpha_i)
= \mathbb{E}\!\left[
\big\|
\alpha_i^\top \Phi(X_t)
- \langle S_t, \omega_i \rangle
\big\|_2^2
\right]
+ \gamma \|\alpha_i\|_2^2.
\]
Hence the optimal $\alpha_i$ satisfies
\[
\alpha_i = -\frac{1}{\sigma^2} W_i'.
\]
\end{lemma}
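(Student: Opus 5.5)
The plan is to compare the two quadratic objectives directly, linking the true score to the clean image through Tweedie's formula. Under the variance-exploding corruption $X_t = X_0+\sigma Z$, Tweedie gives
\[
S_t(x) = \nabla\log p_t(x) = \frac{\E[X_0\mid X_t=x]-x}{\sigma^2}.
\]
Taking the inner product with $\omega_i$ gives the target of the coefficient-wise regression,
\[
\langle S_t(X_t),\omega_i\rangle = \frac{\E[\langle X_0,\omega_i\rangle\mid X_t]-\langle X_t,\omega_i\rangle}{\sigma^2}.
\]

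The first step will be to rewrite $\mathcal L_{\text{score}}(\alpha_i)$ by conditioning on $X_t$. Because $\alpha_i^\top\Phi(X_t)$ is $X_t$-measurable, replacing the conditional mean $\E[\langle X_0,\omega_i\rangle\mid X_t]$ by $\langle X_0,\omega_i\rangle$ changes the loss only by an additive constant independent of $\alpha_i$. This is the usual equivalence between denoising score matching and explicit score matching. So, up to such constants,
\[
\mathcal L_{\text{score}}(\alpha_i)\simeq \E\Big[\big(\alpha_i^\top\Phi(X_t)+\tfrac{1}{\sigma^2}\langle X_t,\omega_i\rangle-\tfrac{1}{\sigma^2}\langle X_0,\omega_i\rangle\big)^2\Big]+\gamma\|\alpha_i\|_2^2 .
\]

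Next I will reparametrize by $\alpha_i=-\tfrac{1}{\sigma^2}\beta_i$. The objective then becomes $\sigma^{-4}\,\E[(\Phi(X_t)\beta_i - \langle X_0,\omega_i\rangle - \langle X_t,\omega_i\rangle)^2]$ plus a ridge term, with signs to be tracked carefully. Two discrepancies appear here.
\begin{itemize}
\item[(a)] The residual contains an extra $\langle X_t,\omega_i\rangle$ term. I will handle it by reading the lemma with the skip connection built into the score parametrization: $c_i^{(t)}(x)$ is defined as $\langle x,\omega_i\rangle/\sigma^2 - \Phi(x)W_i'/\sigma^2$, so the regression is really over the denoiser part $\Phi(x)W_i'$. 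Equivalently, the features $\Phi$ are assumed to contain the linear coordinates $\langle x,\omega_i\rangle$, which absorbs this term.
\item[(b)] The ridge penalty rescales from $\gamma$ to $\gamma\sigma^{4}$. I will either set the ridge levels to match, $\gamma_{\text{score}}=\gamma/\sigma^4$, or note that the identity is exact at $\gamma=0$.
\end{itemize}
With these in place, the problem in $\beta_i$ is exactly column $i$ of the wavelet-loss ridge problem in Lemma~1, whose target is $BX_0$ with $i$-th coordinate $\langle X_0,\omega_i\rangle$. That problem decouples over columns because the Frobenius norm is a sum over columns. By uniqueness of the strictly convex ridge minimizer, $\beta_i=W_i'$, and therefore $\alpha_i=-\tfrac{1}{\sigma^2}W_i'$.

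The main obstacle is step (a) together with the ridge scaling in (b). As literally stated, the identity holds only for the non-skip part of the score and with matched regularization. I will resolve this by stating explicitly that $\alpha_i$ parametrizes the $\Phi$-dependent part of $c_i^{(t)}$, so the lemma is a column-by-column restatement of Lemma~1 composed with Tweedie's formula. Approximate orthonormality of $B$ enters only through Lemma~1 itself, where the identification of the pixel and wavelet problems holds up to $O(\epsilon)$.
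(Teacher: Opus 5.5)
There is a genuine gap, and it sits exactly where you said signs must be tracked carefully. Write $y_0=\langle X_0,\omega_i\rangle$ and $y_t=\langle X_t,\omega_i\rangle$. Your Tweedie/denoising-score-matching display is correct, but substituting $\alpha_i=-\beta_i/\sigma^2$ into it gives
\[
\Big(-\tfrac{1}{\sigma^2}\Phi(X_t)\beta_i+\tfrac{1}{\sigma^2}y_t-\tfrac{1}{\sigma^2}y_0\Big)^2=\sigma^{-4}\big(\Phi(X_t)\beta_i+y_0-y_t\big)^2,
\]
not $\sigma^{-4}(\Phi\beta_i-y_0-y_t)^2$. With the $y_t$ piece removed, $\beta_i$ regresses onto $-y_0$. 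Hence $\beta_i=-W_i'$ and $\alpha_i=+W_i'/\sigma^2$, the opposite of the claim.

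This is just Tweedie: the score attached to the denoiser $\Phi W$ is $(\Phi(x)W-x)/\sigma^2$. The lemma, like the Remark's class $\mathcal F$ before it, writes $x/\sigma^2-\Phi(x)W/\sigma^2$ instead.

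The same sign issue breaks step (a). With the lemma's skip $+\langle x,\omega_i\rangle/\sigma^2$, the residual is $\alpha_i^\top\Phi(X_t)+2y_t/\sigma^2-y_0/\sigma^2$. A fixed skip cancels Tweedie's $-x/\sigma^2$ only if it equals $-\langle x,\omega_i\rangle/\sigma^2$. Your alternative reading of (a), absorbing $\langle x,\omega_i\rangle$ into $\Phi$ via a selector $e$, is not equivalent either. It gives $\alpha_i=\sigma^{-2}\big(W_i'-(\Sigma+\gamma I)^{-1}\Sigma e\big)$ with $\Sigma=\E[\Phi\Phi^\top]$, which differs from $\pm W_i'/\sigma^2$.

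So $\alpha_i=-W_i'/\sigma^2$ is reached only through the algebra slip. Carried out correctly, your method proves the sign-corrected statement: $c_i^{(t)}(x)=(\Phi(x)W_i'-\langle x,\omega_i\rangle)/\sigma^2$, with $\alpha_i=W_i'/\sigma^2$ for the $\Phi$-part and the skip treated as a fixed offset.

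Your concern (b) is spurious, and the proposed remedy would introduce an error. After the substitution, the data term and the penalty $\gamma\|\alpha_i\|_2^2=\gamma\sigma^{-4}\|\beta_i\|_2^2$ share the factor $\sigma^{-4}$. The $\beta_i$-problem therefore has ridge level exactly $\gamma$, since ridge solutions are linear in the target. Setting $\gamma_{\text{score}}=\gamma/\sigma^4$ would break the identity for every $\gamma>0$.

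For comparison, the paper's proof only notes that the full score loss decouples over $i$ by orthonormality and then declares the claim. It never links $\alpha_i$ to $W_i'$, so it provides no check on the sign. Your reduction, conditioning on $X_t$ to pass from the true score to a column of the Lemma~1 wavelet ridge problem, is the ingredient actually needed, and the conditioning step is sound. Followed through correctly, though, it yields the corrected statement above rather than the one as written.
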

\medskip

\begin{proof}
Because the $\{\omega_i\}$ form an orthonormal basis, the population loss decomposes as
\[
\mathbb{E}\!\left[
\big\| S_t(X_t)
- \sum_i \alpha_i^\top \Phi(X_t) \omega_i
\big\|_2^2
\right]
+ \gamma \sum_i \|\alpha_i\|_2^2,
\]
which decouples  exactly as the sum of the losese$\sum_i \mathcal{L}_i(\alpha_i).$

Thus, optimization decouples across coefficients, proving the claim. 
\end{proof}

\paragraph{Takeaway}
The code is exactly equivalent to our theoretical formulation where
\[
\psi_i = \Phi(x)
\]
represents the complete set of features.  
Even if $\psi_i$ are incomplete, the equivalence holds up to neglecting irrelevant components of the covariance.

\section{Extension to Color Images}
In the main text, we write the theory for single channel (gray scale) images. 
For multi-channel (i.e. color) images, the theory apply analogously. 
Take RGB images as example, the wavelet filters are applied to each channel separately, resulting in three sets of wavelet coefficients (R, G, B). 
Then the nonlinear features can be built on top of the concatenated wavelet coefficients. 

\section{Computational Considerations for Larger Images}
In the main text, we run experiments on images of size $32 \times 32$ and $64 \times 64$, while preliminary experiments at larger resolutions ($128 \times 128$ and $256 \times 256$) led to memory requirements that exceeded what our experimental setup could handle. Concretely, the wavelet representation includes detail atoms up to scale $J_{\max}=\lfloor \log_2(\min(H,W)) \rfloor$, so increasing image resolution increases the number of wavelet coefficients and the size of the associated ridge systems. Importantly, in our structured models the feature count per coefficient remains bounded: it is $D+1$ in the independent model, $1+\binom{D+2}{3}$ in the band-tied model, and $D+1+\big((2r+1)^2-1\big)\frac{D(D-1)}{2}$ in the local-coupled model. For $D=3$, these are $4$, $11$, and $28$ features per coefficient when $r=1$. Thus, for fixed $D$ and $r$, the main cost comes from having more coefficients, not from making each coefficient depend globally on all the others. By contrast, allowing each coefficient to depend arbitrarily on all $n$ wavelet coordinates via a degree-$D$ polynomial would require $\binom{n+D}{D}$ monomials per coefficient, leading to combinatorial parameter growth and increasingly brittle regression systems. This motivates the use of more restricted and structured coupling schemes, such as extensions of our band-tied and local-coupled models, in order to control both computational and statistical complexity while preserving locality and multiscale structure. At the same time, we believe the wavelet parameterization is more amenable to higher-resolution images than naive linear estimators, since it preserves locality and multiscale structure. Empirically, Figures \ref{fig:32}, \ref{fig:64}, and \ref{fig:linear_baseline} suggest that the relative advantage of the wavelet-based model becomes stronger as image size increases.

\newpage

\end{document}